\documentclass[11pt,a4paper]{article}

\usepackage{hyperref}
\usepackage{graphicx}
\usepackage{multirow}
\usepackage{amsmath,amssymb,mathtools}
\usepackage{dsfont}
\usepackage{amsthm}
\usepackage{mathrsfs}
\usepackage{xcolor}
\usepackage{manyfoot}
\usepackage{booktabs}
\usepackage{algorithm}
\usepackage{algpseudocode}
\usepackage{listings}
\usepackage{siunitx}
\usepackage{caption}
\usepackage{subcaption}
\usepackage{multirow,adjustbox}

\usepackage{natbib}
\usepackage{bbm}

\newtheorem{theorem}{Theorem}

\newtheorem{lemma}[theorem]{Lemma}

\newtheorem{remark}{Remark}

\newtheorem{definition}{Definition}

\newcommand{\DD}{{\mathcal D}}

\newcommand\ind{\protect\mathpalette{\protect\independenT}{\perp}}
\def\independenT#1#2{\mathrel{\rlap{$#1#2$}\mkern2mu{#1#2}}}

\numberwithin{equation}{section}

\def\argmin{\operatornamewithlimits{arg\,min}}
\def \indd{\mathds{1}}

\newtheorem{hyp}{Assumption}
\title{Transfer learning for causal forests}

\author{Bérénice-Alexia  Jocteur, Véronique Maume-Deschamps, \\
Pierre Ribereau\\
\footnotesize{Université Lyon 1, Centrale Lyon, INSA Lyon,}\\ \footnotesize{Université Jean Monnet,  CNRS,  ICJ UMR5208, 69622 Villeurbanne, France.}
}

\begin{document}

\maketitle

\begin{abstract}
Transfer learning addresses the challenge of transferring knowledge from one domain to another. Traditional transfer learning focuses on adapting models trained on a source domain (with many observations) to improve performance on a target domain (with few observations). In this work, we consider the case of a model shift and focus on transfer learning applied to a causal forest, namely HTERF. This causal forest aims to estimate the Conditional Average Treatment Effect (CATE).\\
The approach considered is the offset method presented by \cite{wang2016active}, adapted to a causal context. This method relies on the use of intermediate models to estimate the offset between source and target distributions. We establish an $L^1$-consistency result of our algorithm and derive a bound on the CATE estimation error of HTERF in the target domain, depending on the error of the intermediate models. Simulation studies demonstrate the good performance of this approach in different settings.
\end{abstract}

{\bf keywords: }{causal inference, causal forest, transfer learning, domain adaptation, offset method}
\section{Introduction}\label{sec:intro}

Estimating causal effects is essential for answering \textit{what-if} questions in many application fields, such as policy evaluation, medicine, or business strategy. The \textit{potential outcomes} framework~\cite{imbens2015causal} defines and estimates causal quantities such as the \textit{Average Treatment Effect} (ATE) and the \textit{Conditional Average Treatment Effect} (CATE). However, a major challenge arises when the target domain---where predictions are needed---has limited labeled data, while a related source domain has abundant observations. This scenario is common in real-world applications, where data distribution shifts, such as \textit{model shift} (i.e., \(P(Y^t(w)|\mathbf{X}^t) \neq P(Y^s(w)|\mathbf{X}^s)\) for \(w \in \{0,1\}\)), can undermine the performance of traditional causal inference methods.\\
\ \\
In this context, \textit{transfer learning} enables knowledge transfer from a data-rich source domain to a data-scarce target domain. Among transfer learning strategies, the \textit{offset method}~\cite{wang2016active}, originally designed for regression tasks, adjusts the source domain predictions using a correction learned from the target domain. This approach is particularly appealing in causal inference, where the goal is to estimate the CATE in the target domain while leveraging the structural information available in the source domain.

In this work, we focus on adapting the offset method to the causal setting, specifically for estimating the CATE under model shift. Our approach integrates the offset method with the \textit{Heterogeneous Treatment Effect-based Random Forest} (HTERF)~\cite{jocteur2024heterogeneous}, a specialized random forest algorithm designed for CATE estimation. HTERF improves upon existing methods, such as Generalized Random Forests~\cite{athey2019generalized}, by employing a splitting criterion tailored to capture treatment effect heterogeneity. We propose to combine the offset method with HTERF.

Our contributions are threefold:
\begin{itemize}
    \item We introduce a causal adaptation of the offset method, with two variants: one that separately models treated and control groups, and another that considers the treatment indicator as an additional covariate.
    \item We establish an \(L^1\)-consistency result for the CATE estimator in the target domain.
    \item We derive a generalization bound that highlights the role of intermediate model errors in the transfer process.
\end{itemize}
\ \\
The paper is organized as follows. Section \ref{background} contains the necessary background on causality, the HTERF algorithm, and domain adaptation. In Section \ref{sec:offset}, we detail the offset method. Section~\ref{sec:causal} presents our adaptation of the offset method to CATE estimation and our \(L^1\) consistency result, along with a generalization bound. A simulation study is presented in Section \ref{sec:simul}. A short discussion can be found in Section \ref{sec:conclusion}. Most technical proofs are postponed to the final Appendix.

\section{Some background}\label{background}
In this section, we provide a brief overview of the background necessary to understand the article. It includes summary elements on the causal framework, the HTERF algorithm, and domain adaptation.

\subsection{The causal framework}

Following the framework outlined in \cite{imbens2015causal}, the potential outcomes denoted \(Y(1)\) and \(Y(0)\) are defined as the outcomes that would have been observed if treatment or control had been assigned to the quantity of interest \(Y\), respectively. Let \(Y = Y(W)\) be the observed outcome, where \(W\) represents a binary treatment. Additionally, we incorporate a set of covariates \(\mathbf{X} \in \mathbb{R}^d\). The conditional average treatment effect (CATE) at \(\mathbf{x}\) is defined as follows:
\begin{equation}\label{eq:taux}
\tau(\mathbf{x}) = \mathbb{E}\left[ Y(1) - Y(0) |\mathbf{X} = \mathbf{x}\right].
\end{equation}
The average treatment effect (ATE) is:
\[
\mathbb{E}\left[ Y(1) - Y(0) \right].
\]
A standard assumption for the identifiability of CATE is unconfoundedness (\cite{rosenbaum1983central}), meaning that conditionally on \(\mathbf{X}\), the treatment assignment \(W\) is independent of the potential outcomes for \(Y\):
$$ \left\{ Y(1), Y(0)\right\} \ind W | \mathbf{X}.$$
Many algorithms in the literature allow for the evaluation of CATE: causal forests, meta-learners, and causal neural networks, among others. In what follows, we focus on HTERF, a short presentation of which is given in the next section.

\subsection{HTERF}
The transfer algorithms we present are based on HTERF, a special case of random forest introduced in \cite{jocteur2024heterogeneous}. It differs from the GRF model of \cite{athey2019generalized} by a new splitting criterion used to construct the trees. Given a sample \(\left(W_i, \mathbf{X}_i, Y_i\right)_{i=1,\dots,n} \in \{0,1\} \times \mathbb{R}^d \times \mathbb{R}\), it provides an estimator \(\hat{\tau}_{B,n}(\mathbf{x})\) of the CATE \(\tau(\mathbf{x})\). With some assumptions on the distribution of \((W, \mathbf{X}, Y)\) and on the construction of the forest, an almost sure convergence result of \(\hat{\tau}_{B,n}(\cdot)\) to \(\tau(\cdot)\) is obtained, as well as an interpretability result. This algorithm has been implemented in {\tt Julia}, in the package { \tt CausalForest}, \cite{jocteur2023julia}\footnote{\url{https://juliapackages.com/p/causalforest}}. For completeness, we summarize below the main points in HTERF construction. For further background knowledge on the HTERF algorithm and the associated theoretical results, we refer the readers to \cite{jocteur2024heterogeneous}. Let us remark that HTERF outperforms GRF and meta-learners on most of the tested settings.

The splitting criterion used in HTERF is designed to maximize the difference in treatment effect between child nodes. Specifically, Definition (\ref{eq:taux}) is used to define this splitting criterion. The tree-building process of HTERF relies on the optimization of the splitting criterion \(\Delta(A, j, z)\):

\begin{eqnarray*}
\Delta(A, j, z) &=& \frac{|A_L||A_R|}{|A|^2} \left( \left( \overline{Y}_{A_{L1}} - \overline{Y}_{A_{L0}} \right) - \right.\\
&&\left.\left( \overline{Y}_{A_{R1}} - \overline{Y}_{A_{R0}} \right) \right)^2,
\end{eqnarray*}
where $A$ is an hyper-rectangle of the form $\displaystyle\prod_{i=1}^d [a_i, b_i]$,
\(j\) belongs to a randomly chosen subset \(\mathcal{M}_{try}\) of \(\{1, \ldots, d\}\), \(z \in A^j = [a_j, b_j]\), \(\overline{Y}_A\) is the empirical mean of the \(Y_i\)'s for \(\mathbf{X}_i \in A\), and
\begin{align*}
    A_{L1} &= \{ \mathbf{X}_i \in A \mid {X}_i^{(j)} < z, W_i = 1 \}, \\
    A_{L0} &= \{ \mathbf{X}_i \in A \mid {X}_i^{(j)} < z, W_i = 0 \}, \\
    A_{R1} &= \{ \mathbf{X}_i \in A \mid {X}_i^{(j)} \geq z, W_i = 1 \}, \\
    A_{R0} &= \{ \mathbf{X}_i \in A \mid {X}_i^{(j)} \geq z, W_i = 0 \}, \\
    A_L &= A_{L1} \cup A_{L0}, \\
    A_R &= A_{R1} \cup A_{R0}.
\end{align*}
\ \\
The estimation of \(\tau\) is then given by:
\[
\hat{\tau}_{B,n}(\mathbf{x}) = \sum_{i : W_i = 1} \alpha_i(\mathbf{x}) Y_i - \sum_{i : W_i = 0} \alpha'_i(\mathbf{x}) Y_i,
\]
where \(\alpha\) (respectively, \(\alpha'\)) are the weights from the forest associated with observations where \(W_i = 1\) (respectively, \(W_i = 0\)).\\
\ \\
Let us now give a short overview of transfer learning, which is useful to understand the concepts, even if we shall adapt it to our causal framework context. 

\subsection{Transfer learning}

Transfer learning is a machine learning technique that leverages knowledge gained from solving one problem and applies it to a different but related problem. In traditional machine learning approaches, models are trained from scratch for each task, requiring substantial amounts of labeled data and computational resources. However, in real-world scenarios, labeled data might be scarce or expensive to acquire, hindering the effectiveness of such methods.\\
\ \\
Transfer learning addresses these limitations by transferring knowledge from a source domain where labeled data is abundant to a target domain where labeled data is scarce. This approach allows models to generalize better and achieve improved performance, particularly in situations where limited labeled data is available for training.\\
\ \\
Domain adaptation is a special case of transfer learning. In domain adaptation, the source and target domains share the same feature space (but have different distributions), while transfer learning includes cases where the target domain's feature space differs from the source feature space. In what follows, we consider the problem of supervised domain adaptation, where both source and target datasets are labeled.\\
\ \\
According to \cite{huyen2022designing}, in a supervised machine learning problem, the training dataset can be viewed as a set of samples from a joint distribution of \(P(\mathbf{X}, Y)\), where \(\mathbf{X}\) is the input and \(Y\) is the output. We are interested in modeling \(P(Y|\mathbf{X})\). Of course, \(P(\mathbf{X}, Y)\) can be decomposed as \(P(\mathbf{X}|Y) \times P(Y)\) or \(P(Y|\mathbf{X}) \times P(\mathbf{X})\).\\
\ \\
Different problems are addressed in transfer learning. The most common is the covariate shift, where the marginal distribution \(P(\mathbf{X})\) differs between source and target domains, but the conditional distribution \(P(Y|\mathbf{X})\) remains the same across the domains. Similarly, label shift can be defined as the case where \(P(Y)\) differs between source and target domains, but the conditional distribution \(P(\mathbf{X}|Y)\) stays the same across the domains. Finally, model shift or concept drift concerns cases where \(P(Y|\mathbf{X})\) changes, but \(P(\mathbf{X})\) remains the same.\\
\ \\
Different strategies are presented in \cite{huyen2022designing} to address these data distribution shifts. The first and simplest strategy is to train models on large and rich datasets, hoping that points following both source and target distributions will be present in this large dataset. This method requires access to large external datasets likely to contain both source and target distributions. Furthermore, it can be costly to train models on very large datasets. A second approach is to use algorithms dedicated to accounting for a certain type of shift; for example, the kernel mean matching (KMM) method (\cite{huang2006correcting}, \cite{gretton2006kernel}) allows dealing with covariate shift. \cite{zhang2013domain} proposes an approach to correct both covariate shift and label shift without using labels from the target distribution (unsupervised domain adaptation problem); similarly, \cite{zhao2019learning} proposed domain-invariant representation learning. \cite{wang2014active} introduces two methods to deal with covariate shift in real regression cases, using labeled source data. Finally, a third kind of approach to dealing with data distribution shift is to retrain the model with labeled target data, either by retraining the model from scratch with both source and target data or by resuming the training of the existing model (trained on the source) on the target data. This second option, named fine-tuning, is easily applicable to neural networks by using techniques such as freezing layers or warm starting.\\
\ \\
Transfer strategies can be extended to the causal context. We consider the source domain \((\mathbf{X}^s, Y^s, W^s)\) and the target domain \((\mathbf{X}^t, Y^t, W^t)\). We focus on the model shift case, i.e., \(P(Y^t(1)|\mathbf{X}^t) \neq P(Y^s(1)|\mathbf{X}^s)\) and \(P(Y^t(0)|\mathbf{X}^t) \neq P(Y^s(0)|\mathbf{X}^s)\). We assume that the distributions for \(\mathbf{X}^s\) and \(\mathbf{X}^t\) (respectively \(W^s\) and \(W^t\)) are the same. If this were not the case, the distributions of \(\mathbf{X}^s\) and \(\mathbf{X}^t\) could be matched by various methods dealing with covariate shift (e.g., KMM) without the use of \(Y\). The goal is then to estimate the CATE function on the target population. Recent work has been done to estimate ATE in a supervised domain adaptation setup in \cite{wei2024transfer}, where the nuisance parameters (such as the propensity score) are estimated using \(\ell^1\)-regularized transfer learning and then plugged into an ATE estimator. We can also mention \cite{kunzel2018transfer}, who proposed to transfer knowledge by using several strategies, such as using neural network (NN) weights estimated from the source domain as the warm start of the subsequent target domain NN training, or using NN weights estimated from the source domain and freezing some of its layers before backpropagating through the unfrozen ones when training on the target dataset. Neural networks with an architecture dedicated to causal transfer learning have also been proposed by \cite{bica2022transfer}. The method we propose is innovative, as it allows transfer learning on CATE estimation without using a neural network.

\section{The offset approach}\label{sec:offset}
We present the offset approach in a regression context. It is useful to understand the underlying concepts. 
\subsection{Presentation}
Let \(\mathcal{X} \in \mathbb{R}^d\) and \(\mathcal{Y} \in \mathbb{R}\) be the input and output spaces for a regression task for both source and target domains. Let \((\mathbf{Z}^s_i)_{i \in \{1, \dots, n\}} = ((\mathbf{X}^s_i, Y^s_i))_{i \in \{1, \dots, n\}}\) be the source dataset of size \(n\). We also consider: $(\mathbf{Z}^{tL}_i)_{i \in \{1, \dots, n_l\}} = $\\
$((\mathbf{X}^{tL}_i, Y^{tL}_i))_{i \in \{1, \dots, n_l\}}$ as the labeled target dataset of size \(n_l\). There is also an unlabeled target dataset on which we want to test the performance of transfer learning, denoted \((\mathbf{X}_i^{tU})_{i \in \{1, \dots, n_u\}}\) of size \(n_u\).

\begin{algorithm}
\caption{Offset algorithm}\label{alg:offset}
\begin{algorithmic}
\Require A source dataset \(\{\mathbf{X}^s_i, Y^s_i\}_{i \in \{1, \dots, n\}}\), a labeled target dataset \(\{\mathbf{X}_i^{tL}, Y_i^{tL}\}_{i \in \{1, \dots, n_l\}}\), and an unlabeled target dataset \(\{\mathbf{X}_i^{tU}\}_{i \in \{1, \dots, n_u\}}\).

\State Estimate a model \(\hat{f}^s\) that regresses \(\{Y^s_i\}\) against \(\{\mathbf{X}^s_i\}\).
\State Estimate a model \(\hat{f}^o\) that regresses \(\{\hat{Y}^o_i\} = \{Y^{tL}_i - \hat{f}^s(\mathbf{X}^{tL}_i)\}\) against \(\{\mathbf{X}^t_i\}\).
\State \(\{Y^{new}_i\} \gets \{Y^s_i + \hat{f}^o(\mathbf{X}^s_i)\}\)
\State Train a model \(M\) on \(\{\mathbf{X}^s_i, Y^{new}_i\} \cup \{\mathbf{X}^{tL}_i, Y^{tL}_i\}\).
\Ensure \(\{\hat{Y}^{tU}_i\} \gets \{M(\mathbf{X}^{tU}_i)\}\)
\end{algorithmic}
\end{algorithm}
\ \\
The offset algorithm (Algorithm \ref{alg:offset}) introduced by \cite{wang2014active} can be used with any regression machine learning algorithm for each estimator (namely \(\hat{f}^s, \hat{f}^o, M\)). In \cite{wang2015generalization}, a generalization bound is given when Kernel Ridge Regression (KRR) is used in the offset algorithm. For completeness, we provide below a short overview of KRR and the associated generalization bound.

\subsection{Using Kernel Ridge Regression}

KRR and the related notations are defined as follows.

\begin{definition}[\cite{bousquet2002stability}]
Let \\
\(\mathcal{S}_T = \{ \mathbf{Z}_1 = (\mathbf{X}_1, Y_1), \dots, \mathbf{Z}_n = (\mathbf{X}_n, Y_n) \}\)\\
be a training sample for a regression task in a reproducing kernel Hilbert space (see \cite{wahba2003introduction}) \(\mathcal{H}\) with associated norm \(\|.\|_{\mathcal{H}}\). Let \(\ell\) be the \(l^2\) loss function; then the KRR estimator is obtained by: given $\lambda >0$,
\[
\argmin_{h \in \mathcal{H}} \frac{1}{n} \sum_{i=1}^n \ell(h, \mathbf{Z}_i) + \lambda \|h\|_{\mathcal{H}}^2
\]
Two errors are defined:
\begin{itemize}
    \item \(R = \mathbb{E}[\ell(\mathcal{S}_T, \mathbf{Z})]\), the generalization error,
    \item \(R_{emp} = \frac{1}{n} \sum_{i=1}^n \ell(\mathcal{S}_T, \mathbf{Z}_i)\), the empirical error.
\end{itemize}
\end{definition}
\ \\
The following theorem gives a bound on the generalization error when KRR is used in the offset method.

\begin{theorem}[\cite{wang2015generalization}]
If KRR is used to estimate the three functions in the offset method, let \(R^t\) be the generalization error on the target dataset of the final model \(M\), \(R^s_{emps}\) the empirical error on the source model, and \(\Bar{R}^o_{emp}\) the empirical error of the estimator \(\hat{f}^o\) against \(\{\mathbf{X}^{tL}, \hat{Y}^o\}\). Then,
\[
R^t - 2\left(R^s_{emps} - \Bar{R}^o_{emp}\right) = O\left(\frac{1}{\sqrt{\lambda_o n_l}}\right),
\]
where \(\lambda_o\) is the hyperparameter of KRR.
\end{theorem}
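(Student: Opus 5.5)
The plan is to use the algorithmic (uniform) stability of KRR, following \cite{bousquet2002stability}, and chain the three estimation steps of the offset algorithm. Throughout we assume a bounded kernel, $k(\mathbf{x},\mathbf{x})\le \kappa^2$, and bounded outputs, so that the squared loss is $\sigma$-admissible and bounded on the relevant balls of $\mathcal{H}$. The first ingredient is the Bousquet--Elisseeff result: KRR with parameter $\lambda$ trained on $m$ points is uniformly $\beta$-stable with $\beta = O(1/(\lambda m))$. It follows that, with high probability,
\[
R - R_{emp} = O\!\left(\beta + \frac{1+m\beta}{\sqrt m}\right) = O\!\left(\frac{1}{\sqrt{\lambda m}}\right)
\]
once $\lambda$ is chosen in the usual regime (e.g.\ $\lambda \gtrsim 1/m$, so that $m\beta \lesssim \sqrt{m/\lambda}$).

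We apply this to the offset regressor $\hat f^o$. It is trained on $n_l$ labeled target points with pseudo-labels $\hat Y^o_i = Y^{tL}_i - \hat f^s(\mathbf{X}^{tL}_i)$. Conditionally on the source sample, $\hat f^s$ is a fixed function, since it depends only on source data, which are independent of the target data. The pseudo-labeled target points are therefore i.i.d., and the stability bound applies directly. This gives
\[
\bar R^o - \bar R^o_{emp} = O\!\left(\frac{1}{\sqrt{\lambda_o n_l}}\right).
\]
For the source model, $n \gg n_l$, so the source generalization gap is of smaller order and can be absorbed into the $O(\cdot)$.

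The central step is to decompose the target error of the final model $M$. Write $Y^t = f^t(\mathbf{X}) + \varepsilon$ and $\hat f^t = \hat f^s + \hat f^o$. The augmented labels $Y^{new}_i = Y^s_i + \hat f^o(\mathbf{X}^s_i)$ are meant to mimic target labels, because the $\mathbf{X}$-distributions coincide (model shift only). Using $(a+b)^2 \le 2a^2+2b^2$, the squared residual of the reconstructed target predictor splits into a source-fit part and an offset-fit part:
\[
(Y^t - \hat f^s - \hat f^o)^2 \le 2\,(Y^s - \hat f^s)^2 + 2\,(\text{offset term})^2 .
\]
This is the source of the factor $2$. The offset term is controlled through $\bar R^o$, and hence by $\bar R^o_{emp} + O(1/\sqrt{\lambda_o n_l})$. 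The sign combination $R^s_{emps} - \bar R^o_{emp}$ should come from writing the offset residual in terms of the identity $\hat Y^o - \hat f^o = Y^{tL} - \hat f^s - \hat f^o$ and matching the cross terms. I would follow Wang et al.'s algebra there rather than rederive it.

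The remaining step is to pass from $\hat f^s+\hat f^o$ to the final KRR model $M$ trained on the union of the two samples. Here I would use stability once more: $M$ is the KRR fit on a sample whose labels approximate $\hat f^s+\hat f^o$ plus noise, and its deviation is again controlled by an $O(1/\sqrt{\lambda m})$ term. That term is dominated by the $n_l$ term.

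The main obstacle is the dependence structure. The source labels are modified by $\hat f^o$, which is learned from the target data, so the augmented sample used for $M$ is not i.i.d. independent of the target sample. I would handle this by conditioning on the target sample and on $\hat f^o$, which makes the source points i.i.d. again. I would then use the boundedness $\|\hat f^o\|_\infty \le \kappa\|\hat f^o\|_{\mathcal{H}} \le \kappa B/\sqrt{\lambda_o}$ to keep the loss admissible. The stated rate is determined by the worst of these terms, namely the $n_l$, $\lambda_o$ term.
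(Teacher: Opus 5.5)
The paper gives no proof of this statement. It quotes it from \cite{wang2015generalization} and only remarks that it rests on Theorem 12 of \cite{bousquet2002stability}, so uniform stability of KRR is the expected tool. Your proposal, however, omits the step that carries the content of the theorem: the analysis of the final model $M$ that produces $2(R^s_{emps}-\bar R^o_{emp})$. What you sketch in its place cannot deliver it, for three reasons.

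\begin{itemize}
\item \emph{Wrong predictor, and circular.} Your inequality $(Y^t-\hat f^s-\hat f^o)^2\le 2(Y^s-\hat f^s)^2+2(\cdot)^2$ concerns $\hat f^s+\hat f^o$. The target risk of that predictor is by definition $\bar R^o=\mathbb{E}\bigl[(Y^t-\hat f^s(\mathbf{X})-\hat f^o(\mathbf{X}))^2\mid\text{training data}\bigr]$. So you bound $\bar R^o$ by a term you then propose to control through $\bar R^o$. For this predictor, stability alone already gives $\bar R^o\le\bar R^o_{emp}+O(\cdot)$, with no factor $2$ and no source term. Those can only arise from relating the target risk of $M$ to its fit on $Y^{new}_i=Y^s_i+\hat f^o(\mathbf{X}^s_i)$, which you dispose of in one sentence.
\item \emph{Incompatible variables.} The inequality mixes $Y^t$ and $Y^s$, which come from independent samples. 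It only makes sense after coupling them on a common $\mathbf{X}$. The second term $Y^t-Y^s-\hat f^o(\mathbf{X})$ then contains $\varepsilon^t-\varepsilon^s$ and is not what $\bar R^o$ controls.
\item \emph{Wrong sign.} Once $(a+b)^2\le 2a^2+2b^2$ is applied, no cross terms are left to ``match''. This route can only give a sum $2(\cdot)+2(\cdot)$ of nonnegative terms, never a difference. In fact the printed sign cannot hold. Take $Y^s\equiv 0$ and $Y^t$ a centred bounded noise of variance $v>0$, independent of $\mathbf{X}$. Then $\hat f^s\equiv 0$ has zero empirical error, so $R^s_{emps}=0$. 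Also $\bar R^o_{emp}\ge 0$ and $R^t\ge v$, so the left-hand side stays $\ge v$, while the right-hand side tends to $0$ as $n_l\to\infty$ for fixed $\lambda_o$. The minus sign is presumably a misprint for $+$.
\end{itemize}

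Two steps you do carry out are also flawed. First, the rate. With $\beta=O(1/(\lambda m))$, your own expression $O\bigl(\beta+(1+m\beta)/\sqrt m\bigr)$ is $O\bigl(1/(\lambda\sqrt m)\bigr)$. Your condition $\lambda\gtrsim 1/m$, i.e.\ $m\beta\lesssim\sqrt{m/\lambda}$, only gives $m\beta/\sqrt m\lesssim 1/\sqrt\lambda$, which does not even vanish. The high-probability bound of Theorem 12 therefore yields $O\bigl(1/(\lambda_o\sqrt{n_l})\bigr)$. This matches the displayed $O\bigl(1/\sqrt{\lambda_o n_l}\bigr)$ only if $\lambda_o$ is bounded below. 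For unclipped KRR it is worse: the loss bound and admissibility constant themselves grow as $\lambda\to 0$, through $\|\hat f\|_\infty\le\kappa\|\hat f\|_{\mathcal H}$. The in-expectation bound $\mathbb{E}[R-R_{emp}]\le 2\beta=O(1/(\lambda m))$ would be compatible with the displayed rate when $\lambda m\gtrsim 1$.

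Second, the dependence. Conditioning on the target sample and on $\hat f^o$ does not make the source points i.i.d. The reason is that $\hat f^o$ is fitted to $Y^{tL}_i-\hat f^s(\mathbf{X}^{tL}_i)$, and $\hat f^s$ is a function of the whole source sample. Hence every augmented label $Y^s_j+\hat f^o(\mathbf{X}^s_j)$ depends on all source points. You need one of two fixes:
\begin{itemize}
\item split the source sample, using one part for $\hat f^s$ and another for $M$; or
\item run a composite stability argument that tracks how replacing one source point perturbs $\hat f^s$, then $\hat f^o$, then every $Y^{new}_j$, then $M$.
\end{itemize}
In addition, $M$ is trained on a mixture of source-type and target points. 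A further argument is needed to convert its empirical error into the target risk $R^t$.
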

\ \\
This result relies on Theorem 12 in \cite{bousquet2002stability}, which gives a property of uniform stability for the KRR algorithm. However, this property is not known for many other algorithms besides KRR (or only a weaker version of stability is obtained), which makes extensions of this result to the causal case presented in the following section challenging.

\section{Causal adaptation}\label{sec:causal}

We propose an offset method adapted to the causal framework.

\subsection{Overview}

Two causal adaptations of the offset method are proposed. In Algorithm \ref{alg:offsetcausal}, the treated and control populations are processed separately to estimate source and offset functions (one estimator for each group). In Algorithm \ref{alg:offsetcausalbis}, the treatment variable is considered as an additional covariate for the source and offset functions.\\
\ \\
The algorithms require a source dataset \(\DD^s = \{W^s_i, \mathbf{X}^s_i, Y^s_i\}_{i \in \{1, \dots, n\}}\) and a target dataset \(\DD^t = \{W^{tL}_i, \mathbf{X}^{tL}_i, Y^{tL}_i\}_{i \in \{1, \dots, n_l\}}\).

\begin{algorithm}[H]
\caption{Offset causal algorithm: separate models}\label{alg:offsetcausal}
\begin{algorithmic}
\Require A source dataset \(\DD^s\), a target dataset \(\DD^t\), and an unlabeled target dataset \(\{\mathbf{X}^{tU}_i\}_{i \in \{1, \dots, n_u\}}\).

\State Estimate a model \(\hat{f}_0^s\) that regresses \(\{Y^s_i\}_{W^s_i=0}\) against \(\{\mathbf{X}^s_i\}_{W^s_i=0}\) and a model \(\hat{f}_1^s\) that regresses \(\{Y^s_i\}_{W^s_i=1}\) against \(\{\mathbf{X}^s_i\}_{W^s_i=1}\).
\State Estimate a model \(\hat{f}^o_0\) that regresses \(\{Y^{tL}_i - \hat{f}^s_0(\mathbf{X}^{tL}_i)\}_{W_i^{tL}=0}\) against \(\{\mathbf{X}^{tL}_i\}_{W^{tL}_i=0}\) and a model \(\hat{f}^o_1\) that regresses \(\{Y^{tL}_i - \hat{f}^s_1(\mathbf{X}^{tL}_i)\}_{W^{tL}_i=1}\) against \(\{\mathbf{X}^{tL}_i\}_{W^{tL}_i=1}\).
\State   \(\{Y^{new}_i\} \gets \{Y^s_i + \hat{f}^o_{W^s_i}(\mathbf{X}^s_i)\}\)
\State Train an HTERF model \(M\) on \(\{W^s_i, \mathbf{X}^s_i, Y^{new}_i\} \cup \{W^{tL}_i, \mathbf{X}^{tL}_i, Y^{tL}_i\}\).
\Ensure \(\{\hat{\tau}^t(\mathbf{X}^{tU}_i)\} \gets \{M(\mathbf{X}^{tU}_i)\}\)
\end{algorithmic}
\end{algorithm}

\begin{algorithm}
\caption{Offset causal algorithm: unique model}\label{alg:offsetcausalbis}
\begin{algorithmic}
\Require A source dataset \(\DD^s\), a target dataset \(\DD^t\), and an unlabeled target dataset \(\{\mathbf{X}^{tU}_i\}_{i \in \{1, \dots, n_u\}}\).

\State Estimate a model \(\hat{f}^s\) that regresses \(\{Y^s_i\}\) against \(\{\mathbf{X}^s_i, W^s_i\}\).
\State Estimate a model \(\hat{f}^o\) that regresses \(\{Y^{tL}_i - \hat{f}^s(\mathbf{X}^{tL}_i, W^{tL}_i)\}\) against \(\{\mathbf{X}^{tL}_i, W^{tL}_i\}\).
\State \(\{Y^{new}_i\} \gets \{Y^s_i + \hat{f}^o(\mathbf{X}^s_i, W^s_i)\}\)
\State Train an HTERF model \(M\) on \(\{W^s_i, \mathbf{X}^s_i, Y^{new}_i\} \cup \{W^{tL}_i, \mathbf{X}^{tL}_i, Y^{tL}_i\}\).
\Ensure \(\{\hat{\tau}^t(\mathbf{X}^{tU}_i)\} \gets \{M(\mathbf{X}^{tU}_i)\}\)
\end{algorithmic}
\end{algorithm}
\ \\
We call \(\hat{f}^s_0\), \(\hat{f}^s_1\), \(\hat{f}^s\) the source estimators and \(\hat{f}^o_0\), \(\hat{f}^o_1\), \(\hat{f}^o\) the offset estimators. Any regression algorithm could be used to obtain the source and offset estimators. In practice, we obtained good results by using regression random forests.
\subsection{Convergence result}
As already mentioned, the unconfoundedness hypothesis is necessary in the potential outcomes setting. Recall that we consider the \textit{model shift} setting, i.e., \(\mathbf{X}^s \stackrel{{\mathcal{L}}}{=} \mathbf{X}^t\), \(W^s \stackrel{{\mathcal{L}}}{=} W^t\), and the conditional laws \(Y^s(W^s)|\mathbf{X}^s\) and \(Y^t(W^t)|\mathbf{X}^t\) differ. The unconfoundedness assumption has to be done on $(Y^s,\mathbf{X}^s, W^s)$ and $(Y^t,\mathbf{X}^t, W^t)$, which are also assumed to be independent (see Assumption \ref{hyp:vartrans}).   \\
Since the size of the source dataset is assumed to be large compared to the target dataset, we state a convergence theorem and a generalization bound on the causal offset algorithm if the HTERF model in the last step is only fit on the set \(\{W_i^s, \mathbf{X}_i^s, Y_i^{new}; i=1, \ldots, n\} = \mathcal{D}_n\).\\
\ \\
We use the following notations, as in \cite{jocteur2024heterogeneous}:
\begin{itemize}
\item \(\Theta_{\ell}, \ell=1, \ldots, B\) are independent random vectors, distributed as a generic random vector \(\Theta \) and independent of \(\mathcal{D}_n\). 
It contains indices of observations that are used to build each tree; indices of observations that are used for estimations in each tree;  indices of splitting candidate variables in each node.
\item \(\mathcal{D}_{n,1}^{\star}(\Theta_\ell)\) and \(\mathcal{D}_{n,2}^{\star}(\Theta_\ell)\) are the disjoint subsamples selected prior to tree construction; the first is used to build the tree, and the second allows the building of weights used during the estimation step.
\item \(A_n(\mathbf{x}; \Theta_\ell, \mathcal{D}_n)\) is the tree cell (subspace of \(\mathcal{X}\)) containing \(\mathbf{x}\).
\item \(N_{n,1}(\mathbf{x}; \Theta_\ell, \mathcal{D}_n)\) (resp. \(N_{n,0}(\mathbf{x}; \Theta_\ell, \mathcal{D}_n)\)) is the number of elements of \(\mathcal{D}_{n,2}^{\star}(\Theta_\ell)\) that fall into \(A_n(\mathbf{x}; \Theta_\ell, \mathcal{D}_n)\), such that \(W_i=1\) (resp. \(W_i=0\)).
\end{itemize}
Let us consider:
\(\tau^t_1(\mathbf{x}) = \mathbb{E}[Y^t(1)|\mathbf{X}^t = \mathbf{x}]\),
\(\tau^t_0(\mathbf{x}) = \mathbb{E}[Y^t(0)|\mathbf{X}^t = \mathbf{x}]\),\\
\(\hat{\tau}^{new}_1(\mathbf{x}) = \sum_{i:W^s_i=1} \alpha_i(\mathbf{x}) Y^{new}_i\), 
and \(\hat{\tau}^{new}_0(\mathbf{x}) = \sum_{i:W^s_i=0} \alpha'_i(\mathbf{x}) Y^{new}_i\), where
\begin{equation}\label{alpha}
\alpha_i(\mathbf{x}) = \frac{1}{B} \sum_{\ell=1}^B \frac{\indd_{\mathbf{X}^s_i \in A_n(\mathbf{x}; \Theta_\ell, \mathcal{D}_n) \land W_i^s=1 \land i \in \mathcal{D}_{n,2}^{\star}(\Theta_\ell)}}{N_{n,1}(\mathbf{x}; \Theta_\ell, \mathcal{D}_n)},
\end{equation}
\begin{equation}\label{alpha'}
\alpha'_i(\mathbf{x}) = \frac{1}{B} \sum_{\ell=1}^B \frac{\indd_{\mathbf{X}^s_i \in A_n(\mathbf{x}; \Theta_\ell, \mathcal{D}_n) \land W_i^s=0 \land i \in \mathcal{D}_{n,2}^{\star}(\Theta_\ell)}}{N_{n,0}(\mathbf{x}; \Theta_\ell, \mathcal{D}_n)}.
\end{equation}
Our aim is to estimate the target CATE:
\[\tau^t(\mathbf{x}) = \tau^t_1(\mathbf{x}) - \tau^t_0(\mathbf{x})\/.\]
We shall make the following assumptions on the distributions and models. There are slight differences depending on whether Algorithm \ref{alg:offsetcausal} (two offset models) or Algorithm \ref{alg:offsetcausalbis} (one single offset model) is concerned.
\begin{hyp}\label{hyp:vartrans}\ \\
\vspace*{-.5cm}\begin{itemize}
\item $(\mathbf{X}^s\/,W^s\/,Y^s)$ and $(\mathbf{X}^t\/,W^t\/,Y^t)$ are independent; the samples $\mathcal{D}^s$, $\mathcal{D}^t$ are independent.
\item the unconfoundedness property is assumed for both  $(\mathbf{X}^s\/,W^s\/,Y^s)$ and $(\mathbf{X}^t\/,W^t\/,Y^t)$:
$$\left\{ Y^s(1), Y^s(0)\right\} \ind W^s | \mathbf{X}^s\/, \  \left\{ Y^t(1), Y^t(0)\right\} \ind W^t | \mathbf{X}^t\/.$$
    \item \(\mathbf{X}^s\) and \(\mathbf{X}^t\) are distributed as \(\mathbf{X} = (X^{(1)}, \ldots, X^{(d)})\), which is a continuous random vector with independent coordinates. The density of \(\mathbf{X}\) is positive and bounded from above and below by positive constants.
    \item \(\mathbf{X}\) takes its values in \(\mathcal{X}\), which is assumed to be a positive compact hyper-rectangle of \(\mathbb{R}^d\): \\
    \(\mathcal{X} = \displaystyle\prod_{i=1}^d [u_i, v_i]\), \(-\infty < u_i \leq v_i < \infty\).
    \item \(W^s\) and \(W^t\) are distributed as \(W\), a binary variable.
    \item {\bf Case Algorithm \ref{alg:offsetcausal}} \\
  If $W^s=1$,   \(Y^s(1) = f_1^s(\mathbf{X}^s) + \varepsilon^s_{1}\), \(\varepsilon^s_1 \ind \mathbf{X}^s\), \\
    and if $W^t=1$,  \(Y^t(1) - f_1^s(\mathbf{X}^t) = f_1^o(\mathbf{X}^t) + \varepsilon^t_{1}\), \(\varepsilon^t_1 \ind  \mathbf{X}^t\). \\
    \(\varepsilon_1^s\) and \(\varepsilon_1^t\) are continuous centered random variables. \\
 If $W^s=0$,     \(Y^s(0) = f_0^s(\mathbf{X}^s) + \varepsilon^s_{0}\), \(\varepsilon^s_0 \ind \mathbf{X}^s\), \\
    and  if $W^t=0$, \(Y^t(0) - f_0^s(\mathbf{X}^t) = f_0^o(\mathbf{X}^t) + \varepsilon^t_{0}\), \(\varepsilon^t_0 \ind \mathbf{X}^t\). \\
    \(\varepsilon_0^s\) and \(\varepsilon_0^t\) are continuous centered random variables.
    \item {\bf Case Algorithm \ref{alg:offsetcausalbis}} \\
    \(Y^s(W^s) = f^s(\mathbf{X}^s, W^s) + \varepsilon^s\) \\
    and \(Y^t(W^t) - f^s(\mathbf{X}^t, W^t) = f^o(\mathbf{X}^t, W^t) + \varepsilon^t\). \\
    \(\varepsilon^s\) and \(\varepsilon^t\) are continuous centered random variables, independent respectively of $\mathbf{X}^s\/, W^s$ and $\mathbf{X}^t\/, W^t$.
    \item \(\mathbf{x} \mapsto f^s(\mathbf{x}, W)\), \(f^s_0(\mathbf{x})\), \(f^s_1(\mathbf{x})\), \(f^t(\mathbf{x}, W)\), \(f^t_1(\mathbf{x})\), \(f^t_0(\mathbf{x})\) are continuous. So in particular \(\mathbf{x} \mapsto \tau^t(\mathbf{x})\), \(\tau^t_1(\mathbf{x})\), \(\tau^t_0(\mathbf{x})\) and \(\mathbf{x} \mapsto \tau^s(\mathbf{x})\), \(\tau^s_1(\mathbf{x})\), \(\tau^s_0(\mathbf{x})\) are continuous.
\end{itemize}
\end{hyp}

\begin{hyp}\label{hyp:numbertrans}
The following assumptions are made on \(B\) (number of trees in HTERF), \(N_{n,1}(\mathbf{x}; \Theta, \mathcal{D}_n)\) resp. \(N_{n,0}(\mathbf{x}; \Theta, \mathcal{D}_n)\) (number of observations in a leaf node such as \(W=1\), resp. \(W=0\)), and on the construction of the trees:
\begin{enumerate}
\item \(B = \mathcal{O}(\sqrt{n})\), \(\exists C > 0\) and  \(\exists \beta > 1\)
such that \(B > C \frac{\sqrt{n}}{(\ln(n))^\beta}\), 
\item $\forall \mathbf{x} \in \mathcal{X}$,\\
$ \mathbb{E}[N_{n,1}(\mathbf{x}; \Theta, \mathcal{D}_n)] = \Omega(\sqrt{n}(\ln(n))^\beta)$.
\item $\forall \mathbf{x} \in \mathcal{X}$,\\
$\mathbb{E}[N_{n,0}(\mathbf{x}; \Theta, \mathcal{D}_n)] = \Omega(\sqrt{n}(\ln(n))^\beta)$.
\item \(\max_{\mathbf{x}, \Theta} N_{n,1}(\mathbf{x}; \Theta, \mathcal{D}_n) = o(n)\).
\item \(\max_{\mathbf{x}, \Theta} N_{n,0}(\mathbf{x}; \Theta, \mathcal{D}_n) = o(n)\).
\item At every step of the tree-building procedure, the probability that the next split is done along the \(j\)-th feature is bounded below by \(\pi/d\) for some \(0 < \pi \leq 1\) for all \(j = 1, \dots, d\).
\item HTERF, as described in \cite{jocteur2024heterogeneous}, uses an honest framework: the training sample is split into two parts: one is used to construct the splits of the trees, the other one is used to calculate the weights \(\alpha\) and \(\alpha'\). This second subsample verifies that each split leaves at least a fraction \(\delta\) of the available training sample such that \(W=1\) (resp. \(W=0\)) on each side of the split, for some \(0 < \delta \leq 0.5\).
\end{enumerate}
\end{hyp}
\noindent Let \(\mathcal{D}^s = (W_i^s, \mathbf{X}_i^s, Y_i^s; i=1, \ldots, n)\) be the source sample and \(\mathcal{D}^t = (W_i^t, \mathbf{X}_i^t, Y_i^t; i=1, \ldots, n_l)\) be the target sample; \(\mathcal{D} = \mathcal{D}^s \cup \mathcal{D}^t\).\\
\ \\
Let us first consider the case of Algorithm \ref{alg:offsetcausal}. The function \(\hat{f}^s_1\) is an estimator of \(f^s_1\) in the first step: if $W^s=1$,
\[
Y^s(1) = \hat{f}^s_1(\mathbf{X}^s) + \varepsilon_{1}^s + E_{1}^s(\mathcal{D}^s, \mathbf{X}^s)
\]
The function \(\hat{f}^o_1\) is an estimator of \(f^o_1\) in the second step: if $W^t=1$
\[
Y^t(1) - \hat{f}^s_1(\mathbf{X}^t) - E_1^s(\mathcal{D}^s, \mathbf{X}^t) = \hat{f}^o_1(\mathbf{X}^t) + E_1^o(\mathcal{D}, \mathbf{X}^t) + \varepsilon_{1}^t
\]
Finally, the third step leads to: if $W^s=1$
\begin{eqnarray*}
Y^{new}(1) &=& Y^s(1) + \hat{f}_1^o(\mathbf{X}^s) \\
&=& f_1^s(\mathbf{X}^s) +f_1^o(\mathbf{X}^s)  + \varepsilon_{1}^s +E_1^o(\mathcal{D}, \mathbf{X}^s)\/.
\end{eqnarray*}
In a similar fashion, we have: if $W^s=0$
 \[
 Y^{new}(0) = f_0^s(\mathbf{X}^s) +f_0^o(\mathbf{X}^s)  + \varepsilon_{0}^s +E_0^o(\mathcal{D}, \mathbf{X}^s) \/.
 \]
In the case of Algorithm \ref{alg:offsetcausalbis}, we proceed the same way and are led to: 
$$Y^{new}(W^s) = f^s(\mathbf{X}^s,W^s) +f^o(\mathbf{X}^s,W^s)  + \varepsilon^s +E^o(\mathcal{D}, \mathbf{X}^s,W^s) .
$$
It should be noted that, in case of Algorithm 2, $\tau_1^t (x)= f_1^s(x)+f_1^o(x)$ and in case of Algorithm 3, $\tau_1^t(x) = f^s(x,1)+f^o(x,1)$ (with similar expressions for $\tau_0^t$). As a consequence, the above expressions of $Y^{new}$ show that  it can be usefully used for the estimation of $\tau_1^t$, $\tau_0^t$ and thus $\tau^t$. To this aim,  HTERF is trained on:\\
\(\mathcal{D}_n = \{W_i^s, \mathbf{X}_i^s, Y_i^{new}\/, i=1\/,\ldots\/,n\}\), which gives the following estimator of the target CATE:\\
\(\hat{\tau}^{new}_{B,n}(\mathbf{X}) = \hat{\tau}^{new}_1(\mathbf{x}) - \hat{\tau}^{new}_0(\mathbf{x})\), \\
where \(\hat{\tau}^{new}_1(\mathbf{x}) = \sum_{i:W_i^s=1} \alpha_i(\mathbf{x}) Y^{new}_i (1)\) and\\
\(\hat{\tau}^{new}_0(\mathbf{x}) = \sum_{i:W_i^s=0} \alpha'_i(\mathbf{x}) Y^{new}_i(0)\); \\
\(\alpha_i(\mathbf{x})\) and \(\alpha'_i(\mathbf{x})\) are defined by Equations (\ref{alpha}), (\ref{alpha'}).

\begin{theorem}\label{theo:convtrans}
Let Assumptions \ref{hyp:vartrans} and \ref{hyp:numbertrans} be verified. Assume that for a fixed \(\beta > \frac{5}{2}\), \(C > 0\), each HTERF tree of the model \(M\) is the highest such that \(C \sqrt{n}(\ln n)^\beta \leq N_{n,0}(\mathbf{x}; \Theta_\ell, \mathcal{D}_n), N_{n,1}(\mathbf{x}; \Theta_\ell, \mathcal{D}_n)\). Assume that \(Y^s, Y^t\) and \(E^o, E^o_1, E^o_0\) error terms are bounded a.s. and that \(E^o, E^o_1, E^o_0\) converge to \(0\) in \(L^2\) as \(n_l\) tends to \(\infty\). Then,
\[
\mathbb{E}\left[ \left|\hat{\tau}^{new}_{B,n}(\mathbf{X}) - \tau^t(\mathbf{X}) \right|\right] \overset{}{\underset{n, n_l \rightarrow \infty}{\longrightarrow}} 0.
\]
\end{theorem}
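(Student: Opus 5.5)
We split the estimation error by exploiting the decomposition of $Y^{new}$ obtained just before the statement. For Algorithm \ref{alg:offsetcausal} (Algorithm \ref{alg:offsetcausalbis} is handled identically, with $f^s(\cdot,w)+f^o(\cdot,w)$ in place of $f^s_w+f^o_w$), we write for $w\in\{0,1\}$
\[
\hat{\tau}^{new}_w(\mathbf{x}) = \sum_{i:W_i^s=w} \alpha^{(w)}_i(\mathbf{x})\bigl(f_w^s(\mathbf{X}_i^s)+f_w^o(\mathbf{X}_i^s)+\varepsilon^s_{w,i}\bigr) + \sum_{i:W_i^s=w} \alpha^{(w)}_i(\mathbf{x})\, E_w^o(\mathcal{D},\mathbf{X}_i^s),
\]
where $\alpha^{(1)}=\alpha$ and $\alpha^{(0)}=\alpha'$. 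Since $\tau^t_w=f^s_w+f^o_w$, the triangle inequality gives
\[
\mathbb{E}\bigl|\hat{\tau}^{new}_{B,n}(\mathbf{X})-\tau^t(\mathbf{X})\bigr| \le \sum_{w\in\{0,1\}} \Bigl( \mathbb{E}\bigl|\tilde{\tau}_w(\mathbf{X})-\tau^t_w(\mathbf{X})\bigr| + \mathbb{E}\Bigl|\sum_{i:W_i^s=w}\alpha^{(w)}_i(\mathbf{X})E_w^o(\mathcal{D},\mathbf{X}_i^s)\Bigr| \Bigr),
\]
with $\tilde{\tau}_w$ the first (``oracle'') sum above.

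\textbf{Oracle term.} The term $\tilde{\tau}_w$ is exactly the HTERF estimator of \cite{jocteur2024heterogeneous} applied to a source-like sample. The responses are $\tilde Y_i=g_w(\mathbf{X}^s_i)+\varepsilon^s_{w,i}$ with $g_w=f^s_w+f^o_w$ continuous, the noise is centered and independent of $\mathbf{X}^s$, and unconfoundedness holds. This sample does not depend on $\mathcal{D}^t$, so it is i.i.d. However, the trees of $M$ are grown on $Y^{new}$, not on $\tilde Y$. The consistency argument of \cite{jocteur2024heterogeneous} only uses honesty, the minimal leaf sizes, the split-direction probability $\pi/d$, and the fraction $\delta$; it does not use the particular form of the splitting criterion. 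The cells therefore still shrink: their diameter goes to $0$ in probability thanks to Assumption \ref{hyp:numbertrans} items 6 and 7 together with the leaf-size condition. By honesty, conditionally on the partition and on the $W_i$, the estimation-sample responses are averaged independently.

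Under Assumptions \ref{hyp:vartrans}, \ref{hyp:numbertrans} and $\beta>5/2$, with bounded $Y$, the convergence theorem of \cite{jocteur2024heterogeneous} then gives almost sure convergence $\tilde\tau_w(\mathbf{x})\to\tau^t_w(\mathbf{x})$. It has two ingredients. The bias term is controlled by uniform continuity of $g_w$ on the compact set $\mathcal{X}$ and the vanishing cell diameters. The variance term is controlled by a Hoeffding bound using $N_{n,w}\ge C\sqrt n(\ln n)^\beta$, then a union bound over the $B=\mathcal{O}(\sqrt n)$ trees, then Borel--Cantelli. Everything is bounded, so dominated convergence upgrades this to $L^1$ convergence.

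\textbf{Offset-error term.} The weights satisfy $\alpha^{(w)}_i\ge0$ and $\sum_i\alpha^{(w)}_i(\mathbf{x})=1$. By Jensen,
\[
\mathbb{E}\Bigl|\sum_i\alpha^{(w)}_i(\mathbf{X})E_w^o(\mathcal{D},\mathbf{X}^s_i)\Bigr| \le \mathbb{E}\sum_i\alpha^{(w)}_i(\mathbf{X})\bigl|E_w^o(\mathcal{D},\mathbf{X}^s_i)\bigr|.
\]
This is the main obstacle: the weights depend on $\mathcal{D}_n$, hence on $E^o_w$ through $Y^{new}$, so we cannot simply replace $\mathbf{X}^s_i$ by a fresh copy of $\mathbf{X}$.

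First try: bound the sum by $\max_i|E^o_w(\mathcal{D},\mathbf{X}^s_i)|$. This alone does not go to zero in $L^1$ without uniform control. Instead, use the honest structure. Conditionally on $\mathcal{D}^t$ and on $\mathcal{D}^\star_{n,1}$, each tree weight is at most $1/N_{n,w}\le 1/(C\sqrt n(\ln n)^\beta)$ per cell. Integrating over $\mathbf{X}$, the quantity $\mathbb{E}_{\mathbf{X}}[\alpha_i^{(w)}(\mathbf{X})]$ equals the average over trees of $\mathbb{P}(\mathbf{X}\in A_i)/N_{n,w}(A_i)$, where $A_i$ is the cell containing $\mathbf{X}^s_i$. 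Since the density of $\mathbf{X}$ is bounded above and below and cells are balanced through $\delta$, the ratio $\mathbb{P}(\mathbf{X}\in A)/N_{n,w}(A)$ is $\mathcal{O}(1/n)$ up to a factor converging to a constant; this is a standard law-of-large-numbers argument over cells with many points. Hence
\[
\mathbb{E}\sum_i\alpha_i^{(w)}(\mathbf{X})|E^o_w(\mathcal{D},\mathbf{X}^s_i)| \lesssim \frac1n\sum_i\mathbb{E}|E^o_w(\mathcal{D},\mathbf{X}^s_i)| + o(1),
\]
where the $o(1)$ absorbs the event on which the cell-count ratio fails, using boundedness of $E^o_w$. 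Since $\mathcal{D}^s$ is independent of $\mathcal{D}^t$ and $\mathbf{X}^s_i\stackrel{\mathcal{L}}{=}\mathbf{X}$, the remaining average is bounded by $\|E^o_w\|_{L^2}$, which tends to $0$ as $n_l\to\infty$ by hypothesis. If the dependence of the partition on $E^o$ through $\mathcal{D}^\star_{n,1}$ becomes delicate, the fallback is the bound $\sum_i\alpha_i\le1$ combined with the $L^2\to L^1$ convergence applied uniformly over partitions, again using honesty so that the estimation-sample points are i.i.d. given the partition. Adding the two terms yields the claimed convergence as $n,n_l\to\infty$.
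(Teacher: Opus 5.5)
\textbf{There is a genuine gap: nothing in your argument decouples the source sample from the fitted objects.} Two of your key steps treat the source points in the final average as independent of what has been fitted, and that is false here. The source fit $\hat f^s_w$ uses all of $\mathcal{D}^s$. It feeds into $\hat f^o_w$, which is trained on $Y^{tL}_k-\hat f^s_w(\mathbf{X}^{tL}_k)$, and hence into every $Y^{new}_j$ and into $E^o_w(\mathcal{D},\cdot)=f^o_w-\hat f^o_w$.

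First, at the end of the offset-error term you bound $\frac1n\sum_i\mathbb{E}|E^o_w(\mathcal{D},\mathbf{X}^s_i)|$ by $\|E^o_w\|_{L^2}$ ``since $\mathcal{D}^s$ is independent of $\mathcal{D}^t$ and $\mathbf{X}^s_i\stackrel{\mathcal{L}}{=}\mathbf{X}$''. This does not follow. The point $\mathbf{X}^s_i$ (with $W^s_i=w$) is a training point of $\hat f^s_w$, so it is not independent of the random function $E^o_w(\mathcal{D},\cdot)$. Thus $E^o_w(\mathcal{D},\mathbf{X}^s_i)$ is an in-sample error, and its law is not that of $E^o_w(\mathcal{D},\mathbf{X})$ at a fresh point. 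The hypothesis of the theorem is an out-of-sample statement; the paper uses it exactly as $\mathbb{E}|E^o_1(\mathcal{D},\mathbf{X}^\diamond_i)|^2\to0$ with $\mathbf{X}^\diamond_i$ independent of $\mathcal{D}$. It gives no control of your in-sample quantity. Your fallback rests on the honesty claim below, so it does not rescue this step.

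Second, in the oracle term honesty does not give conditional independence. The partition is grown on $Y^{new}_j$, $j\in\mathcal{D}^\star_{n,1}$, and these depend, through $\hat f^o_w$ and $\hat f^s_w$, on the responses (hence the noises $\varepsilon^s_{w,i}$) of the estimation half. Moreover the $Y^{new}_j$ are not i.i.d., since they share $\hat f^o_w$. So neither the HTERF theorem nor Hoeffding conditionally on the partition can be invoked. This second point is repairable. A Vapnik--Chervonenkis bound over all hyper-rectangles, as in Lemma~\ref{lem:proxdiamtrans}, gives $\sup_A\bigl|\sum_i\indd_{\mathbf{X}^s_i\in A,\,W^s_i=w}\,\varepsilon^s_{w,i}\bigr|=o_P\bigl(\sqrt n(\ln n)^\beta\bigr)$ whatever the partition is. The leaf-size condition then makes the noise average negligible.

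The paper avoids both issues with the auxiliary target sample $\mathcal{D}^\diamond_n$, independent of $\mathcal{D}$ and of the $\Theta_\ell$. In $U_n$ the noise is $\varepsilon^t_{1,i}$ and the offset error is $E^o_1(\mathcal{D},\mathbf{X}^\diamond_i)$, both evaluated at points independent of the partition and of $E^o_1$, so the $L^2$ hypothesis applies verbatim. The comparison between real and auxiliary samples is isolated in the term $\spadesuit$.

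Your integration over the test point, giving $\mathbb{E}_{\mathbf{X}}\alpha^{(w)}_i(\mathbf{X})=O(1/n)$ uniformly via cell-count concentration over rectangles, is a genuinely different and sound treatment of the weights. What it actually needs is $\mathbb{P}(W=w\mid\mathbf{X}\in A)$ bounded below on cells, an overlap condition, rather than the density bounds you cite. It is also more robust than the paper's pointwise Cauchy--Schwarz/variance estimate. At a fixed $\mathbf{x}$, a local average of $|E^o_w|$ need not vanish when only $\|E^o_w\|_{L^2}\to0$: take a bump centred at $\mathbf{x}$ that shrinks more slowly than the cells. Your device would apply verbatim to the weights $\alpha^\diamond_i$.

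On the real sample, however, it leaves you needing in-sample control of $E^o_w$ at the source design points, which is neither assumed nor proved. To close the argument you must do one of two things. You can add such a hypothesis, e.g.\ $\mathbb{E}\bigl[\frac1n\sum_{i:W^s_i=w}|E^o_w(\mathcal{D},\mathbf{X}^s_i)|\bigr]\to0$, which is plausible for stable first-stage estimators. Or you can route the argument through an independent auxiliary sample and then control the real-versus-auxiliary discrepancy; that discrepancy is where the $E^o$ part at the source points must ultimately be handled.
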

\noindent The more technical parts of the proof are postponed to Appendix \ref{sec:prooftrans}.

\begin{remark}
With estimators \(\hat{f}^s_a\), \((a = 1, 0, \varnothing)\) of the form \(\sum \omega_i Y^s_i\), since \(Y\) is assumed to be bounded, so are \(\hat{f}^s_a\) and \(E^s_a\). \\
With \(\hat{f}^o_a\) of the form \(\sum \omega_i (Y^t_i - \hat{f}^s_a(\mathbf{X}_i^t))\), the error terms \(E^o_a\) are also bounded. Most classical regression algorithms provide estimators of this form: random forest, linear regression, neural network, etc.
\end{remark}

\begin{remark}
Following what is done in the proof of Theorem \ref{theo:convtrans}, the error on the estimation of \(\tau_1^t\) can be bounded as follows (the same rationale applies for \(\tau_0^t\)): let \(\mathbf{x} \in \mathcal{X}\),
\[
\left| \hat{\tau}_1^{new}(\mathbf{x}) - \tau_1^t(\mathbf{x}) \right| \leq B_{offset} + B_{HTERF}.
\]
Overall, this bound tends to \(0\) as \(n, n_l \rightarrow +\infty\). The second term is the bound of HTERF on a sample of size \(n\), and the first term is introduced by the offset method; the rate of convergence of this quantity only depends on the rate of convergence of \(\hat{f}^o\). More details of these bounds are presented in Appendix \ref{sec:bound}.
\end{remark}

\begin{proof}[Proof of Theorem \ref{theo:convtrans}.]
This proof is partially inspired by the proof of Theorem 4.1 in \cite{jocteur2024heterogeneous}. We provide details in the case of Algorithm \ref{alg:offsetcausal}; the proof for Algorithm \ref{alg:offsetcausalbis} follows the same lines by using \(\hat{f}^s(\mathbf{X}, W)\) and \(\hat{f}^o(\mathbf{X}, W)\) instead of \(\hat{f}^s_1(\mathbf{X})\), \(\hat{f}^s_0(\mathbf{X})\), \(\hat{f}^o_1(\mathbf{X})\), \(\hat{f}^o_0(\mathbf{X})\). We emphasize the details induced by the offset method in a causal framework and go more rapidly on parts that could be directly copied from \cite{jocteur2024heterogeneous}. \\
Let us consider an auxiliary dataset \(\mathcal{D}^{\diamond}_n = (Y^\diamond_i, \mathbf{X}^\diamond_i, W^\diamond_i)_{i=1, \dots, n}\) that is a sample of \((Y^t, \mathbf{X}^t, W^t)\), independent of \(\mathcal{D}^t\) and \(\mathcal{D}^s\). This new sample is used to build \((Y^{new, \diamond}_i)_{i=1, \dots, n}\), distributed as \((Y^{new}_i)_{i=1, \dots, n}\): 
if \(W^\diamond_i = 1\),
$$
Y_i^{new, \diamond} (1)= f_1^s(\mathbf{X}_i^\diamond) +f_1^o(\mathbf{X}_i^\diamond)  + \varepsilon_{1,i}^t +E_1^o(\mathcal{D}, \mathbf{X}_i^\diamond)
$$
with $ \varepsilon_{1, i}^{t}$  independent copies of $\varepsilon_1^t$, with the corresponding expression for $Y_i^{new, \diamond} (0)$ if $W_i^\diamond=0$.\\ 
Then,  the HTERF trees are grown using \(\mathcal{D}_n\), but the sample \(\mathcal{D}_n^{\diamond}\) (independent of \(\mathcal{D}_n\) and \(\Theta\)) is used to define a dummy estimator
\begin{align*}
&\tau^{new, \diamond}(\mathbf{x}; \Theta_1, \ldots, \Theta_k, \mathcal{D}_n^{\diamond}, \mathcal{D}_n) \\
=& \sum_{j=1, W_j^\diamond=1}^n \alpha_{n, j}^{\diamond}(\mathbf{x}; \Theta_1, \ldots, \Theta_k, \mathbf{X}^\diamond_1, \ldots, \mathbf{X}^\diamond_n,\\
&W^\diamond_1, \ldots, W^\diamond_n, \mathcal{D}_n) Y^{\diamond, new}_j (1)\\
&- \sum_{j=1, W_j^\diamond=0}^n \alpha_{n, j}^{'\diamond}(\mathbf{x}; \Theta_1, \ldots, \Theta_k, \mathbf{X}^\diamond_1, \ldots, \mathbf{X}^\diamond_n,\\
&W^\diamond_1, \ldots, W^\diamond_n, \mathcal{D}_n) Y^{\diamond, new}_j(0),
\end{align*}
where the weights are, for any \(j = 1, \ldots, n\),
\begin{align*}
&\alpha_{n, j}^{\diamond}(\mathbf{x}; \Theta_1, \ldots, \Theta_k, \mathbf{X}^\diamond_1, \ldots, \mathbf{X}^\diamond_n, W^\diamond_1, \ldots, W^\diamond_n, \mathcal{D}_n) \\
=& \dfrac{1}{B} \sum_{\ell=1}^B \dfrac{\indd_{\left\lbrace \mathbf{X}^\diamond_j \in A_n(\mathbf{x}; \Theta_\ell, \mathcal{D}_n) \right\rbrace \cap W^\diamond_j = 1 \land i \in \mathcal{D}_{n,2}^{\star}}}{N_{n, 1}^{\diamond}(\mathbf{x}; \Theta_\ell, \mathbf{X}^\diamond_1, \ldots, \mathbf{X}^\diamond_n, W^\diamond_1, \ldots, W^\diamond_n, \mathcal{D}_n)}.
\end{align*}
with \(N_{n, 1}^{\diamond}(\mathbf{x}; \Theta_\ell, \mathbf{X}^\diamond_1, \ldots, \mathbf{X}^\diamond_n, W^\diamond_1, \ldots, W^\diamond_n, \mathcal{D}_n)\), the number of elements of \(\mathcal{D}_n^{\diamond}\) that fall into \(A_n(\mathbf{x}; \Theta_\ell, \mathcal{D}_n)\) such that \(W^\diamond = 1\). Throughout this section, we shall use the convention \(\frac{0}{0} = 0\) in case
\(N_{n, 1}^{\diamond}(\mathbf{x}; \Theta_\ell, \mathbf{X}^\diamond_1, \ldots, \mathbf{X}^\diamond_n, W^\diamond_1, \ldots, W^\diamond_n, \mathcal{D}_n) = 0\) and thus \(\indd_{\left\lbrace \mathbf{X}^\diamond_j \in A_n(\mathbf{x}; \Theta_\ell, \mathcal{D}_n) \right\rbrace \cap W^\diamond_j = 1} = 0\) for \(j = 1, \ldots, n\).
Similarly, we have:
\begin{align*}
\begin{split}
&\alpha_{n, j}^{'\diamond}(\mathbf{x}; \Theta_1, \ldots, \Theta_k, \mathbf{X}^\diamond_1, \ldots, \mathbf{X}^\diamond_n, W^\diamond_1, \ldots, W^\diamond_n, \mathcal{D}_n) \\
=& \dfrac{1}{B} \sum_{\ell=1}^B \dfrac{\indd_{\left\lbrace \mathbf{X}^\diamond_j \in A_n(\mathbf{x}; \Theta_\ell, \mathcal{D}_n) \right\rbrace \cap W^\diamond_j = 0\land i \in \mathcal{D}_{n,2}^{\star}}}{N_{n, 0}^{\diamond}(\mathbf{x}; \Theta_\ell, \mathbf{X}^\diamond_1, \ldots, \mathbf{X}^\diamond_n, W^\diamond_1, \ldots, W^\diamond_n, \mathcal{D}_n)}.
\end{split}
\end{align*}
To lighten the notation in the sequel, we will simply write:
\begin{eqnarray*}
\tau_{B, n}^{new, \diamond}(\mathbf{x})& = &\sum_{j=1, W_j^\diamond=1}^n \alpha_j^{\diamond}(\mathbf{x}) Y^{\diamond, new}_j (1)- \sum_{j=1, W_j^\diamond=0}^n \alpha_j^{'\diamond}(\mathbf{x}) Y^{\diamond, new}_j(0) \\
&=& \tau^{new, \diamond}_1(\mathbf{x}) - \tau^{new, \diamond}_0(\mathbf{x})\/. 
\end{eqnarray*}
In other words, the $\mathcal{D}^\diamond_n$ sample allows  the forest machine and the estimation process to be disconnected in the analysis of the algorithm.\\
\ \\
Let \(\mathbf{x} \in \mathcal{X}\), we have:
\begin{align*}
\left| \hat{\tau}^{new}(\mathbf{x}) - \tau^t(\mathbf{x}) \right| \leq &\left| \hat{\tau}^{new}(\mathbf{x}) - \tau^{new, \diamond}(\mathbf{x}) \right| \\
&+ \left| \tau^{new, \diamond}(\mathbf{x}) - \tau^t(\mathbf{x}) \right|,
\end{align*}
and
\begin{align*}
\left| \tau^{new, \diamond}(\mathbf{x}) - \tau^t(\mathbf{x}) \right| \leq& \left| \tau^{new, \diamond}_1(\mathbf{x}) - \tau^t_1(\mathbf{x}) \right| \\
&+ \left| \tau^{new, \diamond}_0(\mathbf{x}) - \tau^t_0(\mathbf{x}) \right|.
\end{align*}
Remark that 
$$\tau_1^t(\mathbf{x}) = \mathbb{E}(Y^t(1)|\mathbf{X}^t=\mathbf{x}) = \mathbb{E}(Y^\diamond(1)|\mathbf{X}^\diamond=\mathbf{x})\/.$$ 
Because of the unconfoundedness hypothesis, it also rewrties:
$$\tau_1^t(\mathbf{x}) = \mathbb{E}(Y^t(1)|\mathbf{X}^t=\mathbf{x}\/, W^t=1) = \mathbb{E}(Y^\diamond(1)|\mathbf{X}^\diamond=\mathbf{x}\/, W^\diamond=1)\/.$$
Each of the two terms will be treated the same way.
\begin{gather*}
\left| \tau^{new, \diamond}_1(\mathbf{x}) - \tau^t_1(\mathbf{x}) \right| \leq \\
\left| \sum_{\substack{i=1 \\ W^{\diamond}_i=1}}^n \alpha^{\diamond}_i(\mathbf{x}) \left[ \left(Y^{new, \diamond}_i\right) - \mathbb{E}[Y^\diamond(1)|\mathbf{X}^\diamond=\mathbf{X}^{\diamond}_i] \right] \right| \\
+ \left| \sum_{\substack{i=1 \\ W^{\diamond}_i=1}}^n \alpha^{\diamond}_i(\mathbf{x}) \left[ \mathbb{E}[Y^t(1)|\mathbf{X}^t=\mathbf{X}^{\diamond}_i] - \mathbb{E}[Y^t(1)|\mathbf{X}^t = \mathbf{x}] \right] \right| \\
=: U_n + V_n.
\end{gather*}
The convergence of \(U_n\) and \(V_n\) to \(0\) in \(L^1\) is addressed in Appendix~\ref{sec:prooftrans}.
\end{proof}
\section{Simulation results}\label{sec:simul}
In the following examples, causal offset with unique and distinct models are compared to the baseline case where HTERF is simply trained on the available target data. Two choices of algorithms are considered to estimate functions \(f^s\) and \(f^o\), namely Kernel Ridge Regression and Regression Random Forest.
\subsection{One-dimensional example}
First, we consider a simple one-dimensional example where the source domain is defined by:\\
\(\mathbf{X}^s \sim U([0, 1])\), \(W^s \sim \text{Bern}(0.5)\), and\\
\(Y^s = \sin(\mathbf{X}^s) \cdot W^s + \cos(\mathbf{X}^s)\).\\
The target domain is defined by:\\
\(\mathbf{X}^t \sim U([0, 1])\), \(W^t \sim \text{Bern}(0.5)\), and\\
\(Y^t = \cos(\mathbf{X}^t) \cdot W^t + \cos(\mathbf{X}^t)\).\\
\ \\
A first simulation is performed with a source sample of size \(10{,}000\) and target samples of size \(500\). The results are recorded in Table~\ref{table:ex1D}.
{\small \begin{table}[h!]
\centering
\begin{tabular}{||c||c|cccc||}
\hline
Method    & RMSE   \\ \hline \hline
HTERF on source       &    0.003   \\ \hline
Offset KRR separate models    &      0.015     \\
Offset KRR unique model & 0.009      \\
No transfer (HTERF on target only) &      0.205      \\ \hline
\end{tabular}
\caption{\small One-dimensional example. Root mean squared errors of CATE on source and on target with three different methods: offset causal with separate KRR models, offset causal with unique KRR model, and HTERF only trained on target data (baseline method). HTERF causal forests have 500 trees. The results are aggregated over 50 simulation replications with 500 test points each (the source dataset stays unchanged; only the target training and test datasets are modified).}\label{table:ex1D}
\end{table}
}
\ \\
Table~\ref{table:ex1D} shows that both causal offset methods have better performance than the baseline method. In this example, using a single KRR model for treated and untreated individuals is more efficient than using two KRR models.\\
 \begin{figure}[H]
\centering
        \begin{subfigure}{.34\textwidth}
            \centering
            \includegraphics[width=.8\linewidth]{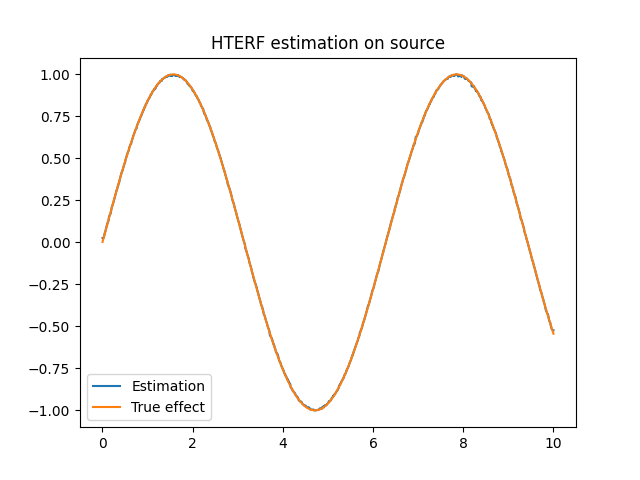}
            \label{fig:krrsource}
        \end{subfigure}
        \begin{subfigure}{.34\textwidth}
            \centering
            \includegraphics[width=.8\linewidth]{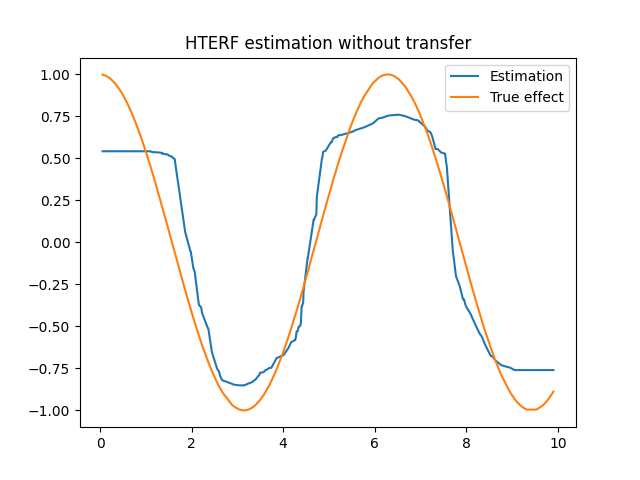}
            \label{fig:krrnotrans}
        \end{subfigure}
         \begin{subfigure}{.34\textwidth}
            \centering
            \includegraphics[width=.8\linewidth]{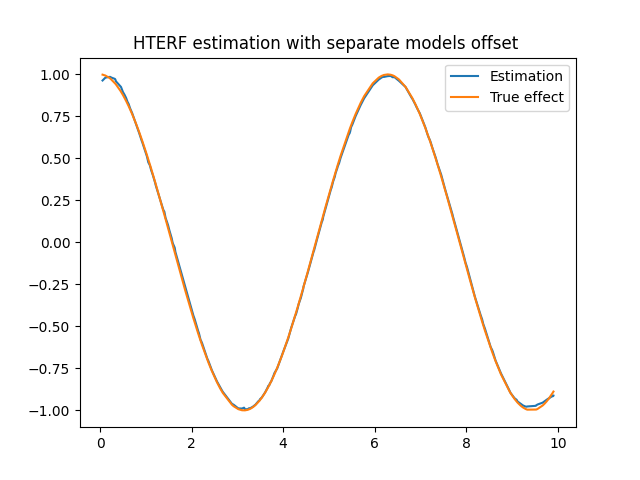}
            \label{fig:krrsepoff}
        \end{subfigure}
        \begin{subfigure}{.34\textwidth}
            \centering
            \includegraphics[width=.8\linewidth]{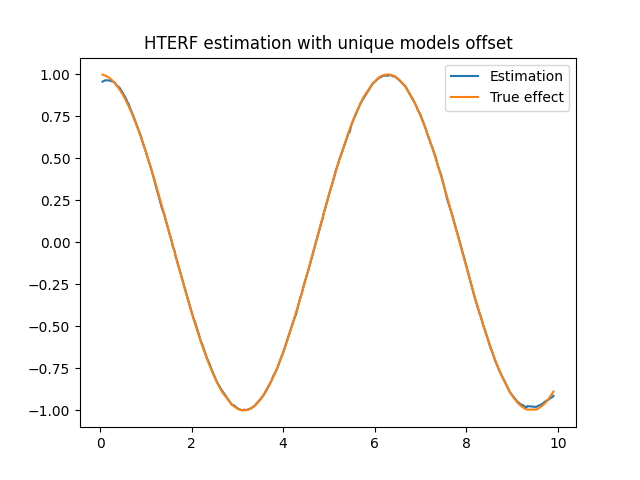}
            \label{fig:krruniqoff}
        \end{subfigure}
        \caption{\small Graphical illustration for the one-dimensional example. First: HTERF CATE estimation on source. Second: HTERF CATE estimation on target using only target data. Third: HTERF CATE estimation on target using offset causal with separate KRR models. Fourth: HTERF CATE estimation on target using offset causal with unique KRR model.}\label{krr1}
\end{figure}
\ \\ 
In order to explore the sensitivity of the method to various parameters (leaf size, number of trees, sample size), we conducted a more extensive study. In order to help reproducibility  of our work, the {\tt Python} code used to get these results  is available at \url{https://plmlab.math.cnrs.fr/maume/transfer_causal/}. We keep the same source and target models as above. The offset is done with regression random forests. The graphs below (Figure \ref{fig_dim1_leaf}) show the impact of leaf size, while Table \ref{table_dim1_leaf} gives more insights on other parameters. Following \cite{hill2011, hahn2020bayesian}, we use PEHE (\textit{precision in estimating heterogeneous effects}):  mean squared error of CATE estimates for each unit in a dataset, as an error measure.
\begin{figure}[H]
\includegraphics[scale=0.4]{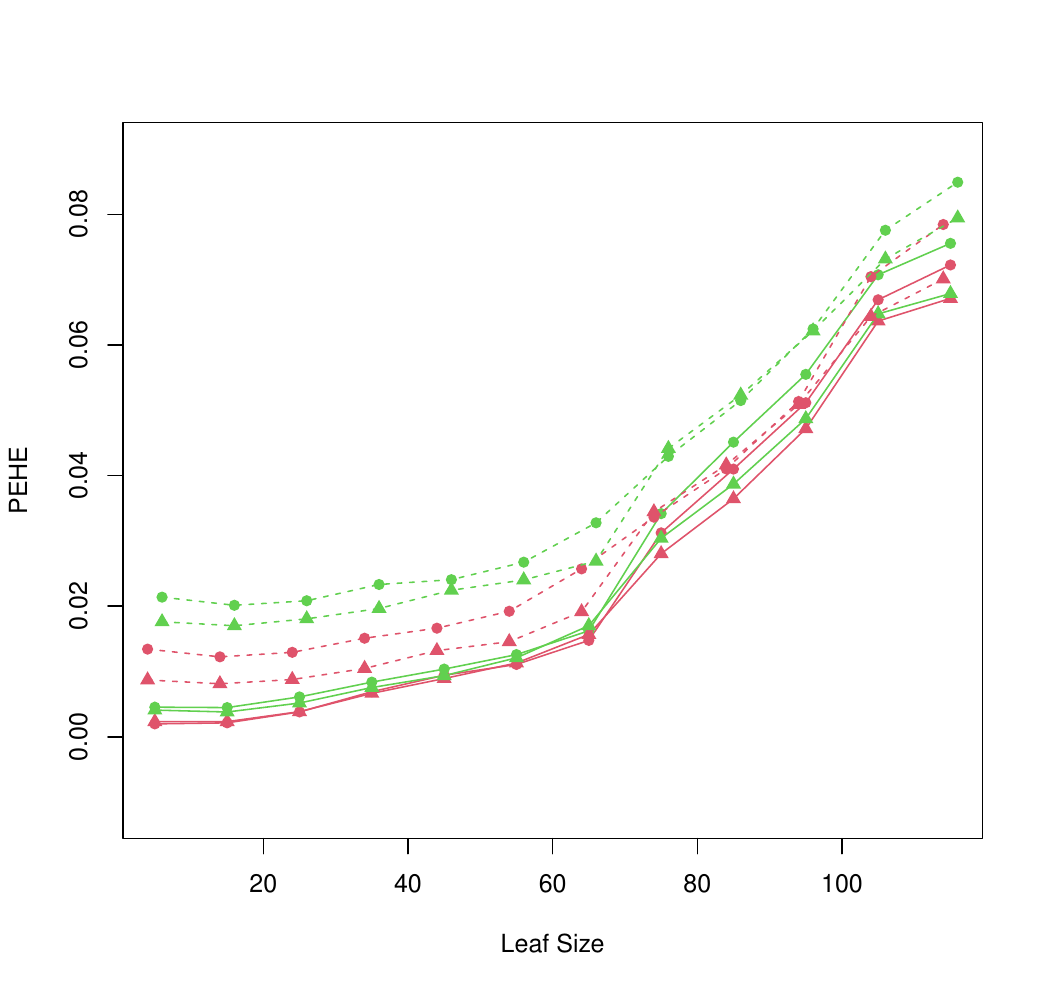}
\caption{Simple example in dimension 1. PEHE are computed over 200 test points and the mean is taken over 30 repetitions; the source sample size is 5,000. The red lines are for two offset models; the green lines are for one single offset model. Solid lines correspond to a ratio of target sample size over source sample size of 5\%, and the dotted lines correspond to a ratio of 2.5\%. The lines with round markers are for 500 trees, while the lines with triangles are for 200 trees.}\label{fig_dim1_leaf}
\end{figure}
{\small 
\begin{table}
\centering
\begin{adjustbox}{angle=90}
\begin{tabular}{|c|c||c|c|c||c|c|c|}
  \hline
   \multicolumn{2}{|c|}{ }& \multicolumn{3}{|c|}{Target/Source = 2,5\%} &  \multicolumn{3}{|c|}{Target/Source = 5\%}\\
Source (leaf) & Tree & \multicolumn{1}{c}{Double} & \multicolumn{1}{c}{Single} & \multicolumn{1}{c|}{Naive} & \multicolumn{1}{|c}{Double} & \multicolumn{1}{c}{Single} & \multicolumn{1}{c|}{Naive} \\
 \hline
 \multirow{5}{*}{ 10 000 (25) } & 100  & 0.249 (0.164) & 0.397 (0.338) & 51.391 (0.798)  & 0.062 (0.013) & 0.073 (0.030)  & 40.460 (3.062) \\
 & 200  &  0.201 (0.075) & 0.265 (0.115) & 53.478 (0.818) &  0.071 (0.019) & 0.071 (0.031) & 41.581 (4.32) \\
  & 300  & 0.234 (0.138) & 0.394 (0.269) & 53.101 (0.478) & 0.068 (0.017) & 0.113 (0.018) & 43.253 (1.903)\\
 & 400  &  0.208 (0.115) & 0.318 (0.189) & 54.198 (0.366) & 0.070 (0.017) & 0.077 (0.018) & 40.974 (3.450)\\
 & 500  &  0.209 (0.154) & 0.332 (0.268) & 52.942 (0.716) & 0.061 (0.016) & 0.067 (0.021) & 40.543 (2.839)\\
 \hline
 \hline 
  \multirow{5}{*}{ 5 000 (15) } & 100   & 1.359 (1.936) & 2.427 (1.951) & 55.855 (1.423) & 0.178 (0.047) & 0.288 (0.129) & 51.248 (1.398)  \\
  & 200  & 0.812 (0.499) & 1.699 (0.908) & 58.252 (2.654) &  0.232 (0.209) & 0.379 (0.272) & 49.022 (2.017)\\
  & 300  & 1.323 (2.179) & 2.562 (2.508) & 53.016 (2.651) & 0.203 (0.106) & 0.339 (0.205) & 49.299 (1.631)\\
  & 400  & 0.705 (0.547) & 1.623 (0.827) & 53.869 (1.389) & 0.168 (0.066) & 0.232 (0.099) & 47.646 (1.048)\\
  & 500  & 1.224 (2.378)  & 2.013 (2.418) & 52.781 (2.061) & 0.213 (0.120) & 0.447 (0.326) & 48.280 (2.172)\\
   \hline
 \hline 
  \multirow{5}{*}{ 2 000 (5) } & 100  & 6.02 (5.288) & 16.094 (9.559) & 56.484 (4.075) & 1.477 (1.509) & 2.516 (1.538) & 41.795 (5.199) \\
   & 200  & 8.431 (6.310) & 17.413 (10.330) & 55.721 (7.077) & 1.935 (2.383) & 4.228 (4.213) & 47.943 (2.709) \\
    & 300  & 8.377 (8.396) & 17.535 (9.812) & 55.632 (4.859) & 1.863 (2.051) & 3.664 (2.454) & 48.548 (4.996) \\
    & 400  & 7.173 (5.630) & 16.788 (8.099) & 55.108 (7.145) & 1.323 (1.396) & 3.321 (3.118) & 44.357 (3.714) \\
    & 500  &  6.567 (4.890) & 18.326 (8.704) & 55.767 (4.563) &2.336 (2.861) &3.598 (2.646) & 49.795 (7.523)\\
 \hline
\end{tabular}
\end{adjustbox}
\caption{\small For the simple dimension 1 example, we have recorded the mean over 30 repetitions of PEHE computed over 200 test points; in brackets is the standard deviation. The computations have been done for different sample sizes (source and target), number of trees, and the leaf size is in brackets. The ``Double'' columns refer to the method with two offset models; the ``Single'' columns refer to the model with one offset model; the ``Naive'' columns refer to the HTERF algorithm on the target sample (i.e., with no transfer). Means and standard deviations are multiplied by 100.}\label{table_dim1_leaf}
\end{table}
}
\ \\
Figure \ref{fig_dim1_leaf} and Table \ref{table_dim1_leaf} illustrate the convergence results with respect to the sample sizes, the influence of the leaf sizes, and the number of trees. Some cross-validation procedure could be used to select the best parameters, but it has not been implemented at this time. We remark on this one-dimensional example that the two offset models estimated by random forests give better results than the single random forest offset models. 
\subsection{Multi-dimensional example}\label{sec:transmulti}
Let us consider a multi-dimensional example inspired by the previous one. The source domain is:\\
\(\mathbf{X}^s = (\mathbf{X}^{s(1)}, \ldots, \mathbf{X}^{s(10)}) \sim U([0, 1]^{10})\),\\
\(W^s \sim \text{Bern}(0.5)\), and\\
\(Y^s = \sin(\mathbf{X}^{s(1)}) W^s + \cos(\mathbf{X}^{s(2)})\). The target domain is defined by: \(\mathbf{X}^t \sim U([0, 1]^{10})\),\\
\(W^t \sim \text{Bern}(0.5)\), and\\
\(Y^t = \cos(\mathbf{X}^{t(1)}) W^t + \cos(\mathbf{X}^{t(2)})\).\\
\ \\
A first simulation is done with a source sample of size 10,000 and target samples of size 500.
{\small 
\begin{table}[h!]
\centering
\begin{tabular}{||c||c|cccc||}
\hline
Method    & RMSE   \\ \hline \hline
Source       &    0.004   \\ \hline
Offset separate KRR models    &      0.957   \\
Offset unique KRR model  & 0.960      \\ \hline
Offset separate RF models    &      0.120   \\
Offset unique RF model & 0.135      \\ \hline
No transfer &      0.348      \\ \hline
\end{tabular}
\caption{\small Multi-dimensional example. Root mean squared errors of CATE on source and on target with five different methods: offset causal with separate KRR models, offset causal with unique KRR model, offset causal with separate random forest (RF) models, offset causal with unique RF model, and HTERF only trained on target data (baseline method). HTERF causal forests have 500 trees. The results are aggregated over 50 simulation replications with 500 test points each (the source dataset stays unchanged; only the target training and test datasets are modified).}\label{table:exmultD}
\end{table}
}
\ \\
Figure~\ref{krr2} illustrates the poor performance of KRR in estimating \(f^s\) and \(f^o\). In the top right image, the KRR estimator of the function \(f^s\) fails to capture the variations of \(Y^s_0\) against \(\mathbf{X}^{s(2)}\). However, when using random forests to estimate \(f^s\) and \(f^o\), causal offset algorithms are more efficient than the baseline method (see Table \ref{table:exmultD}). In this setting, using separate models for treated and control units is the best strategy.
\begin{figure}[H]
\centering
        \begin{subfigure}{.34\textwidth}
            \centering
            \includegraphics[width=.8\linewidth]{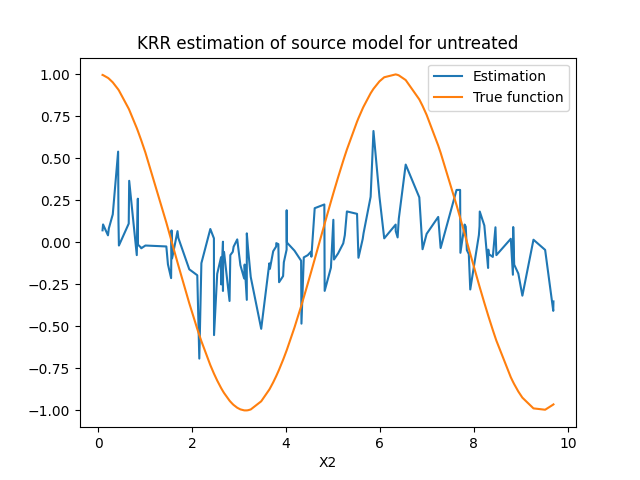}
            \label{fig:krrsource2}
        \end{subfigure}
        \begin{subfigure}{.34\textwidth}
            \centering
            \includegraphics[width=.8\linewidth]{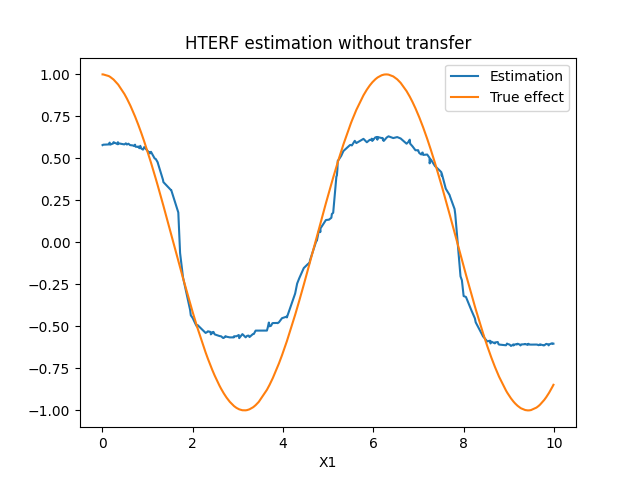}
            \label{fig:krrnotrans2}
        \end{subfigure}
         \begin{subfigure}{.34\textwidth}
            \centering
            \includegraphics[width=.8\linewidth]{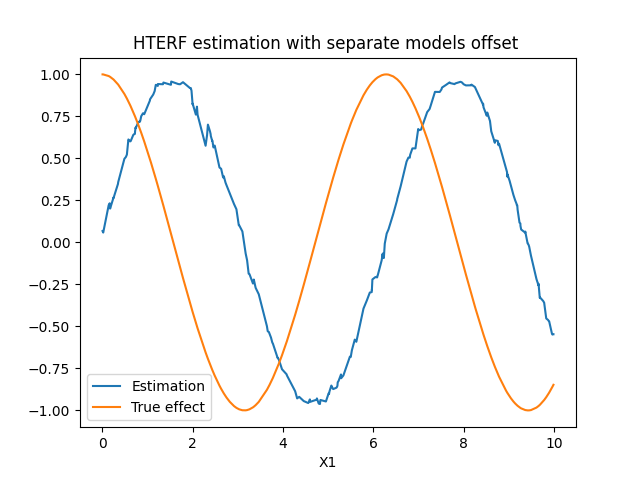}
            \label{fig:krrsepoff2}
        \end{subfigure}
        \begin{subfigure}{.34\textwidth}
            \centering
            \includegraphics[width=.8\linewidth]{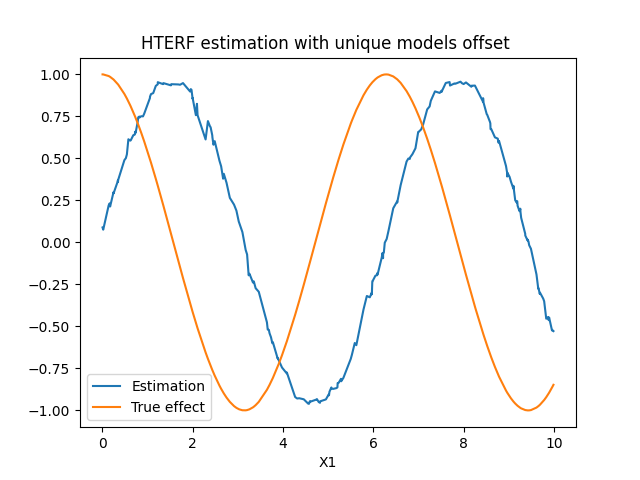}
            \label{fig:krruniqoff2}
        \end{subfigure}
        \caption{\small Graphical illustration for the multi-dimensional example. First: KRR estimation on source of the function \(f_0^s\). Second: HTERF CATE estimation on target using only target data. Third: HTERF CATE estimation on target using offset causal with separate KRR models. Fourth: HTERF CATE estimation on target using offset causal with unique KRR model.}\label{krr2}
\end{figure}
\ \\
As in the one-dimensional case, we conducted a more extensive study to assess the impact of sample sizes, leaf sizes, and the number of trees on the results. The performance measure used is again PEHE. The offset is calculated using regression random forests. Figure \ref{fig_dimsup_leaf} illustrates the impact of leaf size, while Table \ref{table_dimsup_leaf} provides further insights into the other parameters.\\
\begin{figure}[H]
\includegraphics[scale=0.4]{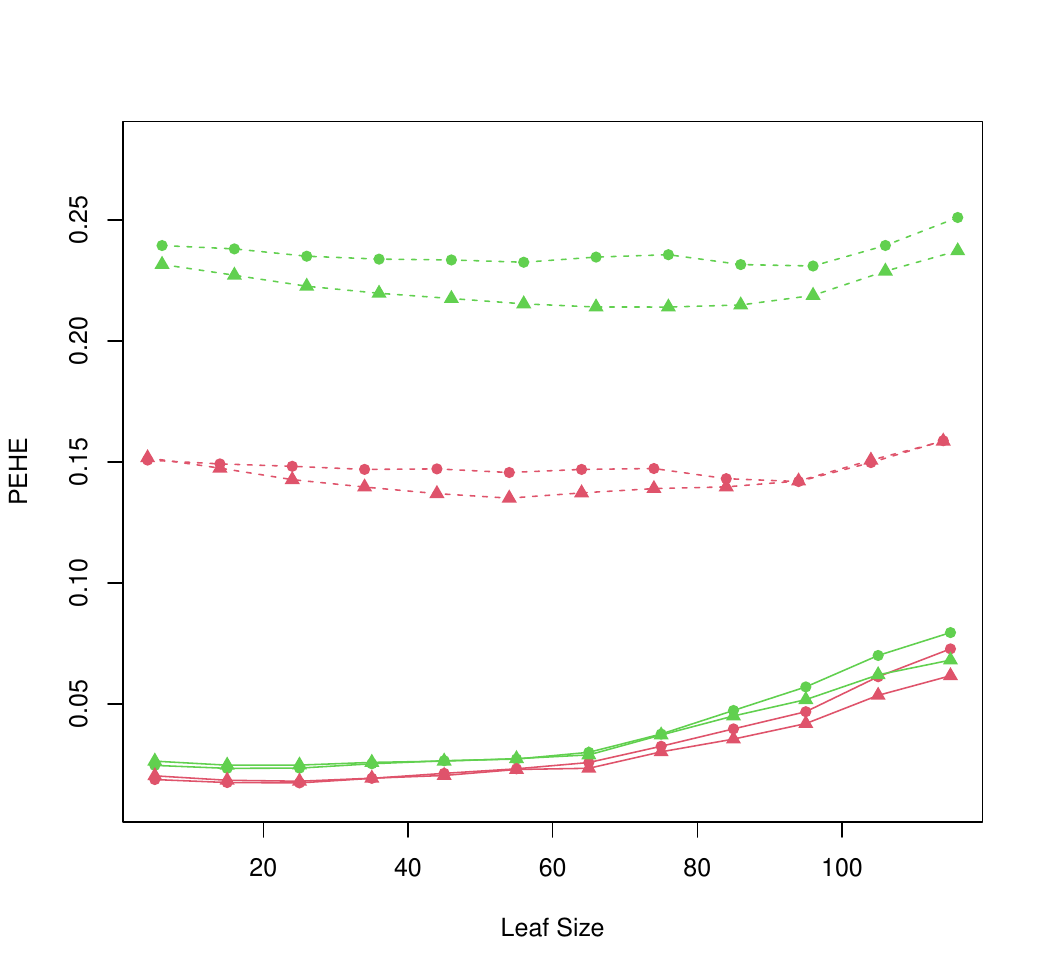}
\caption{\small Example in higher dimension. PEHE are computed over 200 test points and the mean is taken over 30 repetitions; the source sample size is 5,000. The red lines are for two offset models; the green lines are for one single offset model. Solid lines correspond to a ratio of target sample size over source sample size of 5\%, and the dotted lines correspond to a ratio of 2.5\%. The lines with round markers are for 500 trees, while the lines with triangles are for 200 trees.}\label{fig_dimsup_leaf}
\end{figure}
{\small
\begin{table}[h!]
\centering
\begin{adjustbox}{angle=90}
\begin{tabular}{|c|c||c|c|c||c|c|c|}
  \hline
   \multicolumn{2}{|c|}{ }& \multicolumn{3}{|c|}{Target/Source = 2,5\%} &  \multicolumn{3}{|c|}{Target/Source = 5\%}\\
Source & Tree (leaf)& \multicolumn{1}{c}{Double} & \multicolumn{1}{c}{Single} & \multicolumn{1}{c|}{Naive} & \multicolumn{1}{|c}{Double} & \multicolumn{1}{c}{Single} & \multicolumn{1}{c|}{Naive} \\
 \hline
 \multirow{5}{*}{ 10 000 (45) } & 100  & 1.637 (0.962) & 2.061 (1.233) & 56.753 (0.809) & 0.461 (0.084) & 0.514 (0.126) & 55.237 (0.184) \\
 & 200  & 1.552 (0.462) & 1.882 (0.723) & 54.814 (0.305) & 0.434 (0.088) & 0.471 (0.097) & 53.792 (0.683)\\
 & 300  & 1.722 (0.545) & 2.240 (0.755) & 51.135 (0.810) & 0.428 (0.068) & 0.476 (0.069) & 51.218 (0.519) \\
 & 400  & 1.299 (0.406) & 1.719 (0.487) & 52.315 (0.700) & 0.415 (0.087) & 0.451 (0.107) & 52.901 (0.643)\\
 & 500 & 1.635 (0.616) & 2.251 (0.676) & 53.689 (0.796) & 0.046 (0.070) & 0.496 (0.097) & 52.982 (0.415)\\
 \hline
 \hline 
  \multirow{5}{*}{ 5 000 (35) } & 100  &  17.379 (6.779) & 27.096 (10.853) & 53.626 (2.108) & 1.866 (0.505) & 2.568 (0.819) & 50.090 (1.399)\\
  & 200  &  13.967 (10.830) & 21.973 (11.071) & 52.598 (1.627) & 1.938 (0.889) & 2.591 (0.933) &  53.476 (0.662) \\
  & 300  & 13.713 (8.329) & 20.674 (7.285) & 59.665 (4.051) & 2.074 (0.557) & 3.013 (0.955) & 53.985 (0.702)\\
  & 400  & 18.109 (12.446) & 26.480 (11.622) & 52.722 (1.908) & 2.111 (0.671) & 2.977 (0.597) & 50.223 (0.979)\\
  & 500  & 14.697 (8.907) & 23.383 (10.371) & 51.288 (2.606) & 1.934 (0.462) & 2.535 (0.542) & 53.296 (0.906)\\
   \hline
 \hline 
  \multirow{5}{*}{ 2 000 (25) } & 100  & 66.089 (15.218) & 77.807 (11.887) &  56.293 (9.582) & 24.284 (11.014) & 39.270 (10.134) & 51.797 (2.947) \\
  & 200 & 65.477 (12.587) & 77.681 (7.363) & 57.284 (9.536) & 20.093 (9.438) & 33.099 (9.842) & 54.067 (3.345)\\
  & 300 & 56.928 (13.383) & 68.575 (8.217) & 51.632 (4.537) & 26.992 (15.559) & 38.181 (11.114) & 53.646 (3.100)\\
  & 400 & 70.315 (17.692) & 79.726 (8.729) & 60.383 (7.795) & 25.630 (15.875) & 35.515 (11.915) & 57.631 (3.967)\\
  & 500 & 53.397 (11.787) & 67.583 (9.483) & 54.560 (7.482) & 26.673 (11.469) & 36.297  (9.735) & 51.801 (3.876)\\
 \hline
\end{tabular}
\end{adjustbox}
\caption{\small Example in higher dimension. We have recorded the mean over 30 repetitions of PEHE computed over 200 test points; in brackets is the standard deviation. The computations have been done for different sample sizes (source and target), number of trees, and the leaf size is in brackets. Means and standard deviations are multiplied by 100.}\label{table_dimsup_leaf}
\end{table}
}
\noindent Remark that in this dimension 10 model, only the first two variables are used in the model. The eight remaining variables may be seen as noise. Thus, this example shows the robustness of our algorithms to nuisance variables. It should also be noted that, in this example, for source sample sizes and target sample sizes that are too small, applying the transfer method yields worse results than the naive method (without transfer), see Table \ref{table_dimsup_leaf}.
\subsection{A more complex model: Ishigami-like}
We conclude this simulation study with a more complex and non-linear model: an Ishigami-like model in dimension 3. The source domain is:\\
\(\mathbf{X}^s = (\mathbf{X}^{s(1)},\mathbf{X}^{s(2)} , \mathbf{X}^{s(3)}) \sim U([0, 1]^3)\), \\
\(W^s \sim \text{Bern}(0.5)\), and
\begin{eqnarray*}
 Y^s &=& \sin(\mathbf{X}^{s(1)}) + \left[\sin(\mathbf{X}^{s(2)})^2\right] \cdot W^s\\
 &&+ 0.1 [\mathbf{X}^{s(3)}]^2 \cdot \sin(\mathbf{X}^{s(1)}). 
\end{eqnarray*}
The target domain is defined by:\\
\(\mathbf{X}^t \sim U([0, 1]^3)\), \(W^t \sim \text{Bern}(0.5)\), and 
\begin{eqnarray*}
 Y^t &= &\sin(\mathbf{X}^{t(1)}) + \left[\cos(\mathbf{X}^{t(2)})^2\right] \cdot W^t\\
 &&+ 0.1 [\mathbf{X}^{t(3)}]^2 \cdot \sin(\mathbf{X}^{t(1)})\/.
\end{eqnarray*}
We conducted a study illustrating the impact of sample sizes, leaf sizes, and number of trees on the results. The performance measure is again PEHE. The offset is done with regression random forests. Figure \ref{fig_ishi_leaf} shows the impact of leaf size; Table \ref{table_ishi_leaf} gives more insights on other parameters.
\begin{figure}[h!]
\includegraphics[scale=0.4]{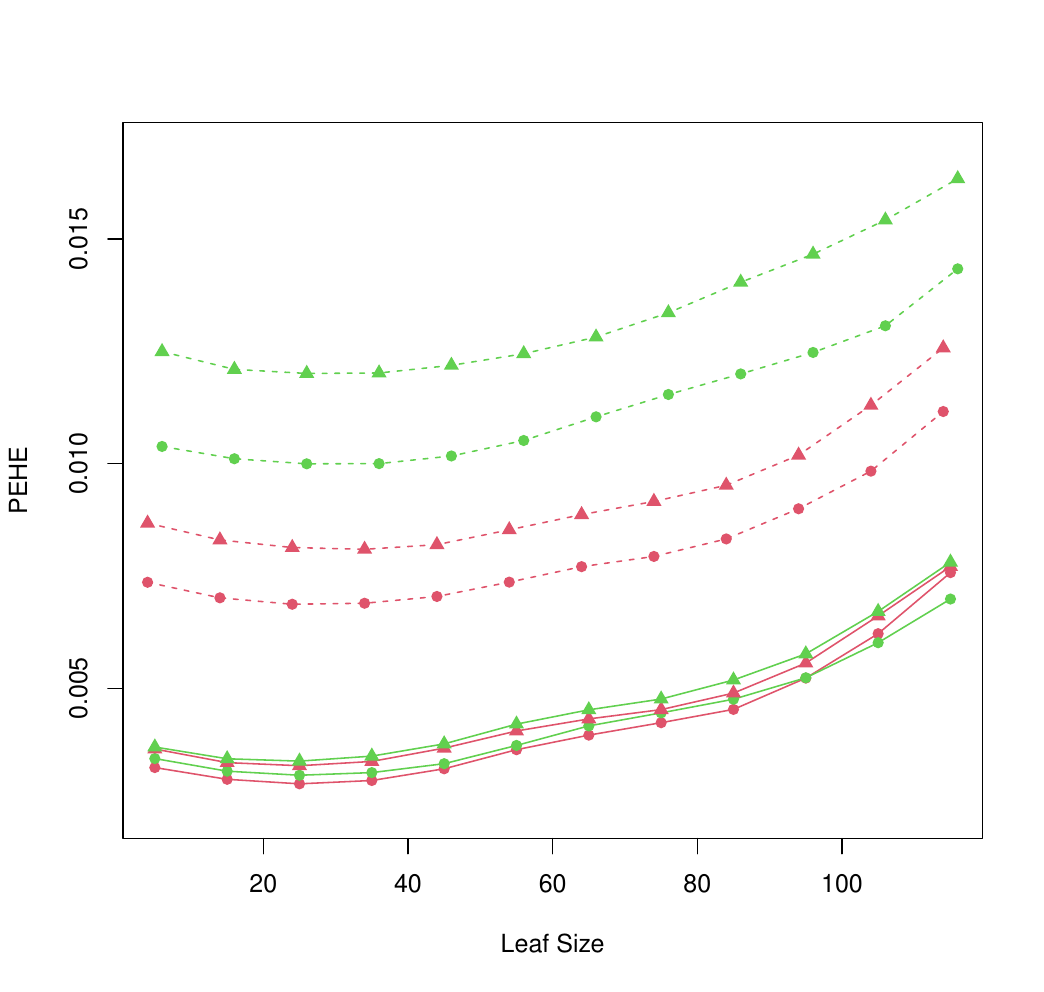}
\caption{\small An Ishigami-like example. PEHE are computed over 200 test points and the mean is taken over 30 repetitions; the source sample size is 10,000. The red lines are for two offset models; the green lines are for one single offset model. Solid lines correspond to a ratio of target sample size over source sample size of 5\%, and the dotted lines correspond to a ratio of 2.5\%. The lines with round markers are for 300 trees, while the lines with triangles are for 100 trees.}\label{fig_ishi_leaf}
\end{figure}
{\small
\begin{table}[h!]
\centering
\begin{adjustbox}{angle=90}
\begin{tabular}{|c|c||c|c|c||c|c|c|}
  \hline
   \multicolumn{2}{|c|}{ }& \multicolumn{3}{|c|}{Target/Source = 2,5\%} &  \multicolumn{3}{|c|}{Target/Source = 5\%}\\
Source & Tree (leaf)& \multicolumn{1}{c}{Double} & \multicolumn{1}{c}{Single} & \multicolumn{1}{c|}{Naive} & \multicolumn{1}{|c}{Double} & \multicolumn{1}{c}{Single} & \multicolumn{1}{c|}{Naive} \\
 \hline
 \multirow{3}{*}{ 10 000 (35) } & 100  & 0.809 (0.333) & 1.202 (0.619) & 12.996 (0.184) & 0.337 (0.069) & 0.349 (0.080) & 12.565 (0.141)   \\
 & 300  &  0.689 (0.173) & 1.000 (0.277) & 11.915 (0.344) & 0.295 (0.051) & 0.313 (0.077) & 11.356 (0.330) \\
 & 500 &   0.797 (0.244) & 1.113 (0.437) & 13.106 (0.254) & 0.321 (0.060) & 0.345 (0.059) & 11.480 (0.225) \\
 \hline
 \hline 
  \multirow{3}{*}{ 5 000 (25) } & 100  &  2.648 (1.975) & 3.843 (2.226) & 12.275 (0.236) & 1.264 (0.281) & 1.836 (0.689) & 13.628 (0.417)\\
  & 300  & 2.745 (1.210) & 4.256 (1.585) & 13.006 (0.671) & 1.219 (0.355) & 1.823 (0.971) & 12.062 (0.316)\\
  & 500  & 2.211 (0.764) & 3.978 (1.369) & 13.257 (0.222) & 1.075 (0.392) & 1.565 (0.691) & 12.113 (0.163)\\
   \hline
\end{tabular}
\end{adjustbox}
\caption{\small An Ishigami-like model. We have recorded the mean over 30 repetitions of PEHE computed over 200 test points; in brackets is the standard deviation. The computations have been done for different sample sizes (source and target), number of trees, and the leaf size is in brackets. Means and standard deviations are multiplied by 100.}\label{table_ishi_leaf}
\end{table}
}
\noindent For this Ishigami-like example, since it is more complex, we have done simulations for source sample sizes of 5,000 and 10,000; the target sizes are 2.5\% or 5\%. In this case, the results without transfer are very poor, which emphasizes the interest of the transfer method.
\subsection{Semi-synthetic dataset}
A final example is presented using the IHDP dataset, already introduced in the HTERF article \cite{jocteur2024heterogeneous}. This dataset has been studied in \cite{wei2024transfer} to compare the accuracy of various ATE estimators in a transfer learning context. In addition to RMSE, two additional indicators of performance for ATE estimation are added:
\begin{itemize}
    \item ATE1: \(\frac{1}{n_u} \left| \sum_{i=1}^{n_u} \hat{\tau}^{new}_{B,n}(\mathbf{X}_i^{tU}) - \tau^t(\mathbf{X}_i^{tU}) \right|\)
    \item ATE2: \(\frac{1}{n_u} \sum_{i=1}^{n_u} \left| \hat{\tau}^{new}_{B,n}(\mathbf{X}_i^{tU}) - \tau^t(\mathbf{X}_i^{tU}) \right|\)
\end{itemize}
To create the source and target domains, the binary variable ``The mother drank alcohol during pregnancy'' has been used: the source domain consists of all children whose mothers did not drink, and the target domain consists of the children whose mothers drank alcohol.
{\small
\begin{table}[h!]
\centering
\begin{tabular}{||c||ccc||}
\hline
Method    & CATE & ATE1 & ATE2 \\ \hline \hline
Offset separate models    &  0.808  & 0.292 &   0.561   \\
Offset unique model & 0.734 & 0.180 &   0.491    \\
No transfer & 1.016   & 0.351 &  0.732   \\ \hline
\end{tabular}
\caption{\small RMSE on CATE and two different errors on ATE with three different methods: offset causal with separate RF models, offset causal with unique RF model, and HTERF only trained on target data (baseline method). HTERF causal forests have 500 trees; the forest of the first step in HTERF has 500 trees. The results are aggregated over 50 simulation replications; the source dataset stays unchanged, but for each replication, the target sample is modified.}\label{table:exihdp}
\end{table}
}
\ \\
In this example, causal offset with a unique random forest model is the most efficient for CATE and ATE estimation, followed by causal offset with two separate models (see Table \ref{table:exihdp}). Both algorithms outperform the baseline method without transfer.
\ \\
\section{Discussion}\label{sec:conclusion}
In this work, we have presented an algorithm to perform transfer learning on the causal inference problem. This approach combines the offset algorithm, already used on regression problems, and the HTERF causal forest. The combination of these two methods allows for a consistency result on the CATE estimation in the target domain. A generalization bound is also shown; these results rely on stronger assumptions than the classical HTERF consistency, especially regarding the number of trees in the forest.
\\
\ \\
Additional work could be done on the proof of consistency to lighten the assumptions. An almost sure convergence might also be obtained instead of an \(L^1\) convergence. Also, from a practical point of view, a cross-validation procedure would be useful, especially to select the optimal leaf size, even if the results are quite good by taking any leaf size in a window from around 5 to 50. One issue in order to propose a cross-validation procedure is that the quantity of interest (CATE) is not observed, as it would be the case in regression. Some work would have to be done to find the appropriate criterion for cross-validation.
\appendix %
\section{Proof of results} \label{sec:prooftrans}
\begin{proof}[Sequel of the proof of Theorem \ref{theo:convtrans}.]
The \(U_n\) term rewrites:
\begin{align*}
    U_n =& \left|\sum_{\substack{i=1\\W^{ \diamond}_i=1}}^n  \alpha^{\diamond}_i (\mathbf{x}) \left(\varepsilon^{t}_{1,i} -  E_1^{o}(\mathcal{D},\mathbf{X}_i^\diamond)\right)\right|\\
    \leq& \left|\sum_{\substack{i=1\\W^{ \diamond}_i=1}}^n  \alpha^{ \diamond}_i \varepsilon^{t}_{1,i} \right| +\left|\sum_{\substack{i=1\\W^{ \diamond}_i=1}}^n  \alpha^{\diamond}_i E_1^{o}(\mathcal{D},\mathbf{X}_i^\diamond)\right|.
\end{align*}
Since \(Y^s, Y^t\) are assumed to be bounded, \(f^s_\cdot, f^o_\cdot\) are assumed to be continuous, and \(\mathbf{X}\) lives in a compact space, then \(\varepsilon^s_\cdot\) and \(\varepsilon^t_\cdot\) are bounded. Following the HTERF consistency proof (see the Appendix of \cite{jocteur2024heterogeneous} and the use of Lemma A.7 there), we have (where $C$ denotes a generic positive constant):
\begin{eqnarray*}
\mathbb{E} \left[\left|\sum_{\substack{i=1\\W^{ \diamond}_i=1}}^n  \alpha^{ \diamond}_i \varepsilon^{t}_{1,i} \right|^2\right] &\leq&  \Vert\varepsilon_1^{t}\Vert_\infty^2 \cdot \left(\frac{C}{ \sqrt{n}(\ln n)^\beta}\right.\\
\lefteqn{\hspace*{-2cm} \left.+ \frac{C^2}{n(\ln n)^\gamma} + C \sqrt{n}(n+1)^{2d}e^{- C(\ln n)^{2\beta}/C}\right)}\\
    &&\rightarrow 0\ \mbox{as} \ n\rightarrow \infty\/.
\end{eqnarray*}
For the last term, 
we use the following decomposition:
\begin{eqnarray*}
\mathbb{E} \left[  \alpha^{\diamond}_i \left|E_1^{o}(\mathcal{D},\mathbf{X}_i^\diamond)\right| \right] &\leq& \mathbb{E} \left[  \left|\alpha^{\diamond}_i -\frac{1}{n}\right|\left|E_1^{o}(\mathcal{D},\mathbf{X}_i^\diamond)\right| \right] \\
&&+ \frac{1}{n} \mathbb{E}\left[  \left|E_1^{o}(\mathcal{D},\mathbf{X}_i^\diamond)\right|\right]
\end{eqnarray*}
Since \(\sum_{j=1}^n \alpha_j^\diamond = 1\) and the \(\mathbf{\alpha}_j^\diamond\)'s are identically distributed conditionally on $\mathcal{D}$, we have for any $n\in \mathbb{N}$:  \(\mathbb{E}[\alpha_i^\diamond|\mathcal{D}] = \frac{1}{n}\) and \(\mathbb{E}[\alpha_i^\diamond] = \frac{1}{n}\). The Cauchy-Schwarz inequality gives:
\begin{eqnarray*}
   \lefteqn{ \mathbb{E} \left[  \left|\alpha^{\diamond}_i -\frac{1}{n}\right|\left|E_1^{o}(\mathcal{D}\/,\mathbf{X}_i^\diamond)\right| \right] }\\
   &&\leq \sqrt{Var(\alpha^{\diamond}_i)}\sqrt{\mathbb{E} \left[  \left|E_1^{o}(\mathcal{D},\mathbf{X}_i^\diamond)\right|^2\right]}.
\end{eqnarray*}
Using the total variance formula:
\[
Var(\alpha^{\diamond}_i) = \mathbb{E} \left[ Var(\alpha^{\diamond}_i|\mathcal{D})  \right].
\]
We can rewrite:
\[
\alpha_i^\diamond = \frac{1}{B} \sum_{l=1}^B \frac{\indd_{\mathbf{X}^\diamond_i \in A_n(l)}}{N^\diamond_{n,1}(\mathbf{x};\Theta_l,\mathcal{D}_n)} =: \frac{1}{B} \sum_{l=1}^B Z_l,
\]
where conditionally on \(\mathcal{D}\), the \((Z_l)_{l \in \{1, \dots, n\}}\) are independent and identically distributed; this leads to
\begin{align*}
Var(\alpha^{\diamond}_i|\mathcal{D}) &= \frac{1}{B} Var(Z_1) \\
&\leq  \frac{1}{B} \mathbb{E}[Z_1^2|\mathcal{D}] \\
&\leq \frac{1}{B} \mathbb{E} \left[\frac{\indd_{\mathbf{X}^\diamond_i \in A_n(1)}}{(N^\diamond_{n,1}(\mathbf{x};\Theta_1,\mathcal{D}_n))^2} \Big| \mathcal{D}\right] \\
&\leq \frac{1}{B} \mathbb{E} \left[\frac{\indd_{\mathbf{X}^\diamond_i \in A_n(1)} \indd_{\{N^\diamond_{n,1} \left( \mathbf{x};\Theta_1,\mathcal{D}_n \right) \geq \lambda\}}}{( N^\diamond_{n,1}(\mathbf{x};\Theta_1,\mathcal{D}_n))^2} \Big| \mathcal{D}\right] \\
&+ \frac{1}{B} \mathbb{E} \left[\frac{\indd_{\mathbf{X}^\diamond_i \in A_n(1)} \indd_{\{N^\diamond_{n,1} \left( \mathbf{x};\Theta_1,\mathcal{D}_n \right) < \lambda\}}}{( N^\diamond_{n,1}(\mathbf{x};\Theta_1,\mathcal{D}_n))^2} \Big| \mathcal{D}\right].
\end{align*}
Let \(\lambda = \frac{\sqrt{n}(\ln n)^\beta}{2}\),
\begin{align*}
Var(\alpha^{\diamond}_1|\mathcal{D}) &\leq \frac{1}{B\lambda} \mathbb{E} \left[\frac{\indd_{\mathbf{X}^\diamond_i \in A_n(1)}}{ N^\diamond_{n,1}(\mathbf{x};\Theta_1,\mathcal{D}_n)} \Big| \mathcal{D}\right] \\
&+ \frac{1}{B} \mathbb{P} \left(N^\diamond_{n,1} \left( \mathbf{x};\Theta_1,\mathcal{D}_n \right) < \lambda \Big| \mathcal{D}\right) \\
&\leq \frac{1}{B\lambda} \mathbb{E} \left[\alpha^{\diamond}_i|\mathcal{D}\right] \\
&+ \frac{1}{B} \mathbb{P} \left( N^\diamond_{n,1} \left( \mathbf{x};\Theta_1,\mathcal{D}_n \right) < \lambda \Big| \mathcal{D}\right) \\
&\leq \frac{1}{B\lambda n} + \mathbb{P} \left( N^\diamond_{n,1} \left( \mathbf{x};\Theta_1,\mathcal{D}_n \right) < \lambda \Big| \mathcal{D}\right).
\end{align*}
Remark that
\begin{equation*}
\begin{multlined}
    \left\{ N^\diamond_{n,1} \left( \mathbf{x};\Theta_1,\mathcal{D}_n \right) < \frac{\sqrt{n}\left(\ln n\right)^\beta}{2} \right\} \\ \subset \left\{ \left|N_{n,1} \left( \mathbf{x};\Theta_1,\mathcal{D}_n \right) - N^\diamond_{n,1} \left( \mathbf{x};\Theta_1,\mathcal{D}_n \right)\right| \right. \\
    > \left. \frac{\sqrt{n}\left(\ln n\right)^\beta}{2} \right\},
    \end{multlined}
\end{equation*}
thus we have
\begin{equation*}
\begin{multlined}
    \mathbb{P} \left( N^\diamond_{n,1} \left( \mathbf{x};\Theta_1,\mathcal{D}_n \right) < \lambda \Big| \mathcal{D}\right) \leq \\ \mathbb{P} \left( \left|N_{n,1} \left( \mathbf{x};\Theta_1,\mathcal{D}_n \right) - N^\diamond_{n,1} \left( \mathbf{x};\Theta_1,\mathcal{D}_n \right)\right| > \lambda \Big| \mathcal{D}\right).
\end{multlined}
\end{equation*}
The following lemma has been stated in \cite{jocteur2024heterogeneous}, it is Lemma A.7 there; it is based on Vapnik-Chervonenkis theory.
\begin{lemma} \label{lem:proxdiamtrans}
    Consider \(u \in \{0,1\}\), as before, \(N_{n,u}(A_n(\Theta)) = N_{n,u}(\mathbf{x};\Theta, \mathcal{D}_n)\) is the number of observations of \(\mathcal{D}_n\) such that \(W=u\) that fall into \(A_n(\Theta) = A_n(\mathbf{x};\Theta,\mathcal{D}_n)\), and \(N_{n,u}^{\diamond}(A_n(\Theta)) = N_{n,u}^{\diamond}(\mathbf{x};\Theta, \mathbf{X}^{\diamond 1}, \ldots, \mathbf{X}^{\diamond n}, \mathcal{D}_n)\), the number of observations of \(\mathcal{D}_n^{\diamond}\) such that \(W=u\) that fall into \(A_n(\Theta)\). Then, \(\forall \varepsilon > 0\),
 \begin{align*}
 \quad \mathbb P \left( \left| N_{n,u} \left( A_n \left( \Theta \right) \right) - N_{n,u}^{\diamond} \left( A_n \left( \Theta \right) \right) \right| > \varepsilon \right) \\
 \leqslant 16 (n+1)^{2d}e^{- \varepsilon^2/128n} \ .
\end{align*}
\end{lemma}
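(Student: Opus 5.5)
We will prove Lemma~\ref{lem:proxdiamtrans} by a symmetrization argument combined with a Vapnik--Chervonenkis bound on the class of cells a tree can produce. The key point is that $A_n(\Theta)$ depends on $\mathcal{D}_n$. Once $\mathcal{D}_n$ and $\Theta$ are fixed, $A_n(\Theta)$ is a hyper-rectangle $\prod_{j=1}^d[a_j,b_j]$ (or a half-open variant) of $\mathcal{X}$. We therefore bound the deviation uniformly over the class $\mathcal{R}$ of all hyper-rectangles intersected with the event $\{W=u\}$:
\[
\mathbb{P}\left(\left|N_{n,u}(A_n(\Theta))-N^\diamond_{n,u}(A_n(\Theta))\right|>\varepsilon\right)\le \mathbb{P}\left(\sup_{A\in\mathcal{R}}\left|\sum_{i=1}^n\left(\indd_{\mathbf{X}_i\in A,W_i=u}-\indd_{\mathbf{X}^\diamond_i\in A,W^\diamond_i=u}\right)\right|>\varepsilon\right).
\]
This step uses that the pairs $(\mathbf{X}_i,W_i)$ of $\mathcal{D}_n$ and $(\mathbf{X}_i^\diamond,W_i^\diamond)$ of $\mathcal{D}_n^\diamond$ have the same law. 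This is exactly how $\mathcal{D}_n^\diamond$ was built, since $\mathbf{X}^s\overset{\mathcal{L}}{=}\mathbf{X}^t$ and $W^s\overset{\mathcal{L}}{=}W^t$ by Assumption~\ref{hyp:vartrans}. The pairs are also independent. After the supremum the right-hand side no longer involves $\Theta$ or the $Y$'s, so it does not matter that the cell was chosen using $\mathcal{D}_n$.

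We then apply the classical two-sample VC inequality, in the form of the symmetrization step in the proof of the Vapnik--Chervonenkis inequality with Rademacher signs. Conditionally on the $2n$ points $(\mathbf{X}_i,W_i),(\mathbf{X}_i^\diamond,W_i^\diamond)$, we swap each pair with an independent sign $\sigma_i$. This leaves the joint law unchanged, because the two samples are exchangeable pair by pair. The supremum over $\mathcal{R}$ then reduces to a maximum over the distinct traces of $\mathcal{R}$ on the $2n$ points. Their number is bounded by the shatter coefficient $S_{\mathcal{R}}(2n)$. Axis-parallel rectangles in $\mathbb{R}^d$ have VC dimension $2d$, and intersecting with the fixed set $\{W=u\}$ does not increase it. Sauer's lemma, or the direct count of at most $(2n+1)^{2d}$ choices of endpoints, gives a bound of order $(n+1)^{2d}$ up to a constant. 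For each fixed trace, Hoeffding's inequality applied to $\sum_i\sigma_i c_i$ with $|c_i|\le1$ gives $2e^{-\varepsilon^2/(2n)}$. A union bound then yields an inequality of the form $C(n+1)^{2d}e^{-c\varepsilon^2/n}$.

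The main obstacle is the bookkeeping of the constants $16$ and $128$. We will reproduce the standard route, as in Devroye--Györfi--Lugosi, Theorem~12.5, which directly gives $8S(n)e^{-n t^2/32}$ for empirical-measure deviations $t$. Substituting $t=\varepsilon/n$ and loosening to one extra symmetrization with a factor $2$ and a halved $\varepsilon$ produces $16(n+1)^{2d}e^{-\varepsilon^2/(128n)}$. We do not expect to need to optimize these constants, since only the polynomial-times-exponential form is used later, with $\varepsilon=\lambda=\sqrt{n}(\ln n)^\beta/2$. There $(n+1)^{2d}e^{-(\ln n)^{2\beta}/512}\to0$, and even multiplied by $B\sim\sqrt{n}$ it still tends to $0$. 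A secondary care point is the honesty split: $N_{n,u}$ counts only points of $\mathcal{D}^\star_{n,2}(\Theta)$. We will handle this by conditioning on the index set selected by $\Theta$, which is independent of the data, and applying the argument to that subsample and the matching subsample of $\mathcal{D}_n^\diamond$ of the same size $m\le n$. Since the bound is monotone in $n$, it still holds.
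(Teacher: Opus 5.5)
Your argument is correct, and it differs from the paper's in one genuine respect. The paper does not reprove the lemma; it imports Lemma A.7 of the HTERF paper. The constants $16=2\cdot 8$ and $128=4\cdot 32$ nonetheless point to the standard route. With $\mu_u(A)=\mathbb{P}(\mathbf{X}\in A, W=u)$, one writes $|N_{n,u}(A_n(\Theta))-N^{\diamond}_{n,u}(A_n(\Theta))|\le \sup_{A}|N_{n,u}(A)-n\mu_u(A)|+\sup_{A}|N^{\diamond}_{n,u}(A)-n\mu_u(A)|$, with the suprema taken over boxes. One then applies the one-sample VC inequality $8S(n)e^{-nt^2/32}$ to each sample separately at $t=\varepsilon/(2n)$, with $S(n)\le (n+1)^{2d}$. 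So the ``factor $2$ and halved $\varepsilon$'' in your last paragraph is this triangle inequality through the common centre $n\mu_u(A)$, not an extra symmetrization.

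Your main route is different and more economical. Since $\mathcal{D}_n^{\diamond}$ is already an independent copy, it \emph{is} the ghost sample, so the first symmetrization step of the VC proof disappears. The Rademacher swap, the count of traces on $2n$ points and Hoeffding then give $2(2n+1)^{2d}e^{-\varepsilon^2/(2n)}$. This has a far better exponent, at the cost of a prefactor of order $2\cdot 4^{d}$ instead of $16$.

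You do not need to fall back on Devroye--Gy\"orfi--Lugosi to reach the stated constants. Either observe that the stated right-hand side is $\ge 1$ unless $\varepsilon^2/(128n)>\ln 16+2d\ln(n+1)$, and in that regime your bound is the smaller one. Or split the $\sigma$-weighted sum into its $\mathcal{D}_n$ part and its $\mathcal{D}_n^{\diamond}$ part at level $\varepsilon/2$, so that only traces on $n$ points occur; this gives $4(n+1)^{2d}e^{-\varepsilon^2/(8n)}\le 16(n+1)^{2d}e^{-\varepsilon^2/(128n)}$ directly. The centering route buys a one-line appeal to a textbook theorem; yours is self-contained and sharper.

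One point to tighten, which the paper shares. Both routes use that $(\mathbf{X}^s,W^s)$ and $(\mathbf{X}^t,W^t)$ have the same \emph{joint} law, i.e.\ the same propensity score in both domains. This does not follow from $\mathbf{X}^s\overset{\mathcal{L}}{=}\mathbf{X}^t$ and $W^s\overset{\mathcal{L}}{=}W^t$, as you claim. In the centering route it is exactly what makes the two centres coincide. The paper takes it for granted when it asserts that $Y^{new,\diamond}$ is distributed as $Y^{new}$, so state it as an assumption rather than deriving it from Assumption~\ref{hyp:vartrans}. Your handling of the honesty subsample is sound but not needed for the statement as written, which counts all observations of $\mathcal{D}_n$.
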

\noindent Using Assumption \ref{hyp:numbertrans} and Lemma \ref{lem:proxdiamtrans}, there exist \(C\) and \(M\) positive constants such that:
\begin{align*}
Var(\alpha^{\diamond}_i) \leq& \frac{1}{B\lambda n} +\mathbb E\left[\mathbb P \left( \left|N_{n,1} \left( \mathbf x;\Theta_1,\mathcal{D}_n \right)- \right. \right. \right.\\
&\left.\left.\left. N^\diamond_{n,1} \left( \mathbf x;\Theta_1,\mathcal{D}_n \right)\right|>\frac{\sqrt n \left( \ln n\right)^\beta}{2}\Big|\mathcal{D}\right)\right]\\
\leq& \frac{2}{B n^{3/2}\left(\ln n\right)^\beta} +\mathbb P \left( \left|N_{n,1} \left( \mathbf x;\Theta_1,\mathcal{D}_n \right)-\right.\right.\\
&\left.\left. N^\diamond_{n,1} \left( \mathbf x;\Theta_1,\mathcal{D}_n \right)\right|>\frac{\sqrt n \left( \ln n\right)^\beta}{2}\right)\\
    \leq & \frac{2}{M n^2 } + 4 C (n+1)^{2d}e^{- (\ln n)^{2\beta}/512}\\
 &   =\mathcal{O}\left(\frac{1}{n^2}\right). 
\end{align*}
Now, since we assume that \(E_1^o\) converges to \(0\) in \(L^2\) as $n$ goes to $\infty$, it also converges to $0$ in \(L^1\):
\begin{align*}
   \mathbb E \left[ \left|\sum_{\substack{i=1\\W^{ \diamond}_i=1}}^n  \alpha^{\diamond}_i E_1^{o}(\mathcal{D},\mathbf X_i^\diamond)\right| \right] &\leq \sum_{\substack{i=1\\W^{ \diamond}_i=1}}^n\sqrt{Var(\alpha^{\diamond}_i)} \cdot \\
    &\sqrt{\mathbb E \left[  \left|E_1^{o}(\mathcal{D},\mathbf X_i^\diamond)\right|^2\right]} \\
    &+ \mathbb E\left[  \left|E_1^{o}(\mathcal{D},\mathbf X_1^\diamond)\right|\right] \cdot \\
  &  \leq   n\sqrt{Var(\alpha^{\diamond}_1)}\\&\sqrt{\mathbb E \left[  \left|E_1^{o}(\mathcal{D},\mathbf X_1^\diamond)\right|^2\right]}\\
  &+ \mathbb E\left[  \left|E_1^{o}(\mathcal{D},\mathbf X_1^\diamond)\right|\right]\\
    &\longrightarrow 0\ \mbox{as} \ n \rightarrow \infty\/.
\end{align*}
The term \(V_n\) can now be treated:
\begin{align*}
 V_n& = \left | \sum_{\substack{i=1\\W^{ \diamond}_i=1}}^n \alpha^{ \diamond}_i(\mathbf x) \left[   \mathbb E (Y^t(1)|\mathbf{X}^t=\mathbf X^{\diamond}_i ) \right.\right. \\
 &\left.\left.- \mathbb E (Y^t(1)|\mathbf X^t=\mathbf x )  \right] \right|
\end{align*}
The following lemma allows to control the variation of $\tau_1$ on leaves. It is Lemma 2 in \cite{meinshausen2006quantile} and it is similar to Lemma 5 in \cite{benard2022mean}.
\begin{lemma}\label{lem:diameter}
    Let Assumptions \ref{hyp:vartrans} and \ref{hyp:numbertrans} be verified, let \(\mathbf{x} \in \mathcal{X}\) and \(\ell \in [1, B]\). Denote \(A_n(\mathbf{x}, \Theta_\ell, \mathcal{D}_n) = \bigotimes_{j=1}^d I(\mathbf{x}, \Theta_\ell, \mathcal{D}_n)\), where \(I(\mathbf{x}, \Theta_\ell, \mathcal{D}_n)\) are intervals; then,
    \[
    \max_{j=1, \dots, d} |I(\mathbf{x}, \Theta_\ell, \mathcal{D}_n)| = o(1).
    \]
\end{lemma}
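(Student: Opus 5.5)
We need to prove Lemma \ref{lem:diameter}: for a fixed $\mathbf{x}$ and a tree $\ell$, every side of the cell $A_n(\mathbf{x},\Theta_\ell,\mathcal{D}_n)$ has length tending to $0$. The plan follows \cite{meinshausen2006quantile}. Two facts combine: the cell becomes small in probability mass because leaves hold only $o(n)$ points, and the splitting rules force that mass loss to be spread over every coordinate.

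\textbf{Step 1: the number of splits along each coordinate grows.}
By Assumption \ref{hyp:numbertrans}(4)--(5), each leaf contains $o(n)$ honest observations with $W=1$ and with $W=0$. By the $\delta$-balance in item (7), each split keeps at least a fraction $\delta$ of the current treated and control points on each side. So a node at depth $k$ holds at most about $(1-\delta)^k n$ points of each treatment group, and at least about $\delta^k n$.

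The leaf is the highest node with at least $C\sqrt n(\ln n)^\beta$ points. Hence the depth $K_n$ of the leaf containing $\mathbf{x}$ satisfies
\[
K_n \geq \frac{\ln\left(n/\max N_{n,u}\right)}{-\ln(1-\delta)} \to \infty .
\]
By item (6), each split is along coordinate $j$ with probability at least $\pi/d$, independently given the past. Write $S_j$ for the number of splits along coordinate $j$ on the path to $\mathbf{x}$. A Chernoff or binomial comparison bound then gives $S_j \geq \pi K_n/(2d)$ with probability tending to $1$. So $S_j\to\infty$ in probability for every $j$.

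\textbf{Step 2: from split counts to side lengths.}
Each split along $j$ leaves at least a fraction $\delta$ of the points in the node on each side. Because of the product structure and independent coordinates in Assumption \ref{hyp:vartrans}, the empirical proportions can be compared with the marginal law of $X^{(j)}$ restricted to the current interval. The uniform deviation bound over rectangles used in Lemma \ref{lem:proxdiamtrans} (VC class of hyper-rectangles) makes empirical and true masses of all cells uniformly close at scale $\sqrt n(\ln n)^\beta$.

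Consequently, each split along $j$ multiplies the marginal probability $\mathbb{P}(X^{(j)}\in I_j)$ by at most $1-\delta/2$, up to negligible error. Since the density is bounded above and below, the length satisfies
\[
|I_j| \lesssim \mathbb{P}(X^{(j)}\in I_j) \leq (1-\delta/2)^{S_j} \to 0 .
\]

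\textbf{Main obstacle.}
The delicate point is Step 2. The $\delta$-balance holds for the honest sample counts, while the side length is controlled by the population mass. The comparison must stay valid down to leaves of size about $\sqrt n(\ln n)^\beta$, uniformly over the random, data-dependent cells.

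I would handle it with the VC inequality of Lemma \ref{lem:proxdiamtrans}. Its deviation $e^{-\varepsilon^2/128n}$ is taken at $\varepsilon\asymp\sqrt n(\ln n)^\beta$, where $\beta>5/2$ makes the bound summable. This shows that relative empirical masses approximate true masses for all nodes above the leaf. It yields $\max_j|I_j|=o(1)$ in probability, and hence in $L^1$ since the lengths are bounded by the side lengths of $\mathcal{X}$. That $L^1$ form is what the control of $V_n$ needs: it is combined with uniform continuity of $\tau^t_1$ on the compact set $\mathcal{X}$.
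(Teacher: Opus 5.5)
Your proposal is correct and is essentially the argument the paper relies on when it cites Lemma~2 of \cite{meinshausen2006quantile}; the paper gives no proof of its own. The structure matches: the leaf depth diverges, item~(6) makes every coordinate split infinitely often, and a uniform VC comparison on the pooled honest sample turns each split along $j$ into a geometric shrinkage of $|I_j|$. The pooled sample keeps the product law for its covariates even if $W$ depends on $\mathbf{X}$, and it inherits the $\delta$-balance from both arms. You also rightly use the theorem's leaf-size floor $C\sqrt{n}(\ln n)^\beta$, which Assumption~\ref{hyp:numbertrans} alone does not supply.

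Two small fixes are needed. First, the depth lower bound comes from $N\geq \delta^{K_n}n$, which gives $K_n\geq \ln(n/\max N_{n,u})/(-\ln\delta)$. Your constant $-\ln(1-\delta)$ gives the \emph{upper} bound on depth, though $K_n\to\infty$ either way. Second, for relative errors to vanish at leaf scale, take $\varepsilon=\sqrt{n}(\ln n)^{\beta'}$ with $1/2<\beta'<\beta$ rather than $\varepsilon\asymp\sqrt{n}(\ln n)^\beta$. Any $\beta'>1/2$ already beats the factor $(n+1)^{2d}$, so $\beta>5/2$ plays no role in this lemma.
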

\noindent Combining Lemma \ref{lem:diameter} with the continuity of \(\tau_1\), we get
\[
\forall \ell \in [1, B], \forall \mathbf{x} \in \mathcal{X}, \sup_{\mathbf{z} \in A_n(\mathbf{x}, \Theta_\ell, \mathcal{D}_n)} |\tau_1(\mathbf{z}) - \tau_1(\mathbf{x})| \overset{a.s.}{\underset{n \rightarrow \infty}{\longrightarrow}} 0.
\]
Using this result, we get that $V_n$ goes  to \(0\) almost surely, as $n$ goes to infinity: indeed, if  for all $\ell =1\/, \ldots \/, B$ $ \mathbf{X}^\diamond_i \not\in A_n(\mathbf{x}, \Theta_\ell, \mathcal{D}_n)$ then $\alpha^\diamond_i(\mathbf{x}) =0$ so that 
\begin{align*}
         \left| \sum_{\substack{i=1\\W^\diamond_i=1}}^n \alpha^\diamond_i(\mathbf{x}) \left[   \mathbb{E} \left[ Y^t(1)|\mathbf{X}^t=\mathbf{X}^\diamond_i \right]  - \mathbb{E} \left[ Y^{t}(1)|\mathbf{X}^t=\mathbf{x} \right] \right] \right| \\
        \leq  \sup_{\mathbf{z} \in A_n(\mathbf{x})} \left| \tau_1^t(\mathbf{z}) - \tau_1^t(\mathbf{x}) \right| \xrightarrow[n \rightarrow +\infty]{} 0\/.
\end{align*}
Since  \(\tau_1\) is bounded, so is $V_n$ and by the dominated convergence theorem, we have the \(L^1\) convergence to $0$ of $V_n$, when $n$ goes to infinity.
%
\\
\ \\
It remains to consider the quantity \(\left| \hat{\tau}^{new}(\mathbf{x}) - \tau^{new, \diamond}(\mathbf{x}) \right|\). We  consider separately but in a similar fashion \(\left| \hat{\tau}^{new}_1(\mathbf{x}) - \tau^{new, \diamond}_1(\mathbf{x}) \right|\) and \(\left| \hat{\tau}^{new}_0(\mathbf{x}) - \tau^{new, \diamond}_0(\mathbf{x}) \right|\):
\begin{eqnarray*}
 \lefteqn{\spadesuit= \left|    \hat{\tau}^{new}_1(\mathbf x) - \tau^{new, \diamond}_1(\mathbf x) \right|=} \\
 && \left | \frac{1}{B}  \sum_{l=1}^B\sum_{j=1}^n \frac{\indd_{\mathbf X_j\in A_n(l)}\indd_{W_j^s=1}}{N_{n,1}(\mathbf x;\Theta_l,\mathcal{D}_n)} Y_j^{new}(1) -\right.\\
 &   &\left.\frac{\indd_{\mathbf X^\diamond_j\in A_n(l)}\indd_{W^\diamond_j=1}}{N^\diamond_{n,1}(\mathbf x;\Theta_l,\mathcal{D}_n)} Y^{new, \diamond}_j(1) \right |.
\end{eqnarray*}
Following the HTERF consistency proof in \cite{jocteur2024heterogeneous} (see the end of the Appendix there, where the similar quantity is considered), $\spadesuit$ converges to \(0\) almost surely, as $n$ goes to infinity. The crutial point here is that up to noise ($\varepsilon^t$) and error ($E^0_1$) terms,  $Y^{new,\diamond}(1)$ and $Y^{new}(1)$ are distributed as $f_1^s(\mathbf{X}^s)+f_1^o(\mathcal{X}^s)$. \\  
As already mentionned, since all  \(Y^t\) is bounded and $f^s$, $f^o$ are continuous, we have that $Y^{new,\diamond}$ and $Y^{new}$ are bounded. So that the dominated convergence theorem, implies that the $\spadesuit$ term tends to \(0\) in \(L^1\) as $n$ goes to infinity.
\end{proof}

\section{Generalisation bound}\label{sec:bound}

We start with the following decomposition:
\begin{align*}
\lefteqn{\left| \hat{\tau}_1^{new}(\mathbf{x}) - \tau_1^t(\mathbf{x}) \right|}\\
\leq &\left| \hat{\tau}_1^{new}(\mathbf{x}) -\tau_1^{new, \diamond}(\mathbf{x}) \right| +  \underbrace{\left| \tau^{new, \diamond}_1(\mathbf{x}) - \tau^t_1(\mathbf{x}) \right|}_{\leq U_n+V_n} \\
\leq& \left| \frac{1}{B}  \sum_{l=1}^B \sum_{j=1}^n \frac{\indd_{\mathbf{X}_j \in A_n(l)} \indd_{W_j=1}}{N_{n,1}(\mathbf{x};\Theta_l,\mathcal{D}_n)} Y_j^{new} \right. \\
&\left.- \frac{\indd_{\mathbf{X}^\diamond_j \in A_n(l)} \indd_{W^\diamond_j=1}}{N^\diamond_{n,1}(\mathbf{x};\Theta_l,\mathcal{D}_n)} Y^{new, \diamond}_j \right|  +\left| \sum_{\substack{i=1\\W^{ \diamond}_i=1}}^n  \alpha^{\diamond}_i E_1^{o}(\mathcal{D},\mathbf{X}_i^\diamond)\right| + \\
& \left|\sum_{\substack{i=1\\W^{ \diamond}_i=1}}^n  \alpha^{ \diamond}_i \varepsilon^{t}_{1,i} \right| +\left| \sum_{\substack{i=1\\W^{\diamond}_i=1}}^n \alpha^{\diamond}_i(\mathbf{x}) \left[   \mathbb{E} \left[ Y^{t}(1)|\mathbf{X}^t=\mathbf{X}^{\diamond}_i \right]  - \right.\right.\\
&\left.\left.\mathbb{E} \left[ Y^{t}(1)|\mathbf{X}^t=\mathbf{x} \right]  \right] \right| \\
 \\
\leq& B_{offset} + B_{HTERF}\/,
\end{align*}
where
\begin{align*}
    B_{offset} =& \left|\sum_{\substack{i=1\\W^{ \diamond}_i=1}}^n  \alpha^{\diamond}_i (\mathbf{x}) E_1^{o}(\mathcal{D},\mathbf{X}_i^\diamond)\right| + \\
    & \left|\sum_{\substack{i=1\\W^{ \diamond}_i=1}}^n  \alpha^{ \diamond}_i \varepsilon^{t}_{1,i} \right|  \\
\end{align*}
and
\begin{align*}
     B_{HTERF} =& \left| \sum_{\substack{i=1\\W^{\diamond}_i=1}}^n \alpha^{\diamond}_i(\mathbf{x}) \left[   \mathbb{E} \left[ Y^{t}(1)|\mathbf{X}=\mathbf{X}^{\diamond}_i \right]\right.\right.  -\\
     &\left.\left.\mathbb{E} \left[ Y^{t}(1)|\mathbf{X}^t=\mathbf{x} \right]  \right] \right|+ \\
& \left| \frac{1}{B}  \sum_{l=1}^B \sum_{j=1}^n \frac{\indd_{\mathbf{X}_j \in A_n(l)} \indd_{W_j=1}}{N_{n,1}(\mathbf{x};\Theta_l,\mathcal{D}_n)} Y_j^{new}\right. -\\
&\left.\frac{\indd_{\mathbf{X}^\diamond_j \in A_n(l)} \indd_{W^\diamond_j=1}}{N^\diamond_{n,1}(\mathbf{x};\Theta_l,\mathcal{D}_n)} Y^{new, \diamond}_j \right|.
\end{align*}

\section*{Acknowledgements} We are grateful to Natixis, specifically the Enterprise Risk Management department, for partially funding this work. 
\bibliographystyle{apalike}
\bibliography{sample}

\end{document}